\documentclass{article} 
\usepackage{shmiclr2021_conference,times}


\usepackage{amsmath,amsfonts,bm}
\DeclareMathOperator*{\st}{ \quad \textnormal{ s.t. }}%
%
%








\def\eqref#1{equation~\ref{#1}}









\def\1{\bm{1}}










\DeclareMathAlphabet{\mathsfit}{\encodingdefault}{\sfdefault}{m}{sl}
\SetMathAlphabet{\mathsfit}{bold}{\encodingdefault}{\sfdefault}{bx}{n}











\newcommand{\R}{\mathbb{R}}



\DeclareMathOperator*{\argmin}{arg\,min}

\usepackage{hyperref}
\usepackage{url}

\usepackage[utf8]{inputenc} 
\usepackage[T1]{fontenc}    
\usepackage{hyperref}       

\usepackage{url}            
\usepackage{booktabs}       
\usepackage{amsfonts}       
\usepackage{nicefrac}       
\usepackage{microtype}      
\usepackage{algorithm}
\usepackage{algorithmic}
\usepackage{multirow}
\usepackage{xcolor}
\usepackage{graphicx}

\usepackage{amsmath}
\usepackage{amssymb}
\usepackage{amsthm}

\usepackage{cleveref} 

\usepackage{multirow} 
\usepackage{subcaption}
\usepackage{wrapfig}


\newtheorem{proposition}{Proposition}

\theoremstyle{definition}

\theoremstyle{remark}
\newtheorem*{remark}{Remark}


\newcommand*\samethanks[1][\value{footnote}]{\footnotemark[#1]}
\title{Witches' Brew: Industrial Scale Data Poisoning via Gradient Matching}


\author{
  %
 Jonas Geiping\thanks{Authors contributed equally.} \\
  Dep. of Electrical Eng. and Computer Science\\
  University of Siegen\\
  \texttt{jonas.geiping@uni-siegen.de} \\
  \And
  Liam Fowl\samethanks \\
  Department of Mathematics\\
  University of Maryland\\
  \texttt{lfowl@umd.edu} \\
  \And
  W. Ronny Huang \\
  Department of Computer Science\\
  University of Maryland\\
  \texttt{wronnyhuang@gmail.com} \\
   \And
  Wojciech Czaja \\
  Department of Mathematics\\
  University of Maryland\\
  \texttt{wojtek@math.umd.edu} \\
  \And
  Gavin Taylor \\
  Computer Science\\
  US Naval Academy\\
  \texttt{taylor@usna.edu} \\
  \And
  Michael Moeller\thanks{Authors contributed equally.} \\
  Dep. of Electrical Eng. and Computer Science\\
  University of Siegen\\
  \texttt{michael.moeller@uni-siegen.de} \\
  \And
  Tom Goldstein\samethanks \\
  Department of Computer Science\\
  University of Maryland\\
  \texttt{tomg@umd.edu} \\
}

%

\iclrfinalcopy 
\begin{document}

\maketitle

\begin{abstract}
Data Poisoning attacks modify training data to maliciously control a model trained on such data.
In this work, we focus on targeted poisoning attacks which cause a reclassification of an unmodified test image and as such breach model integrity. We consider a
particularly malicious poisoning attack that is both ``from scratch" and ``clean label", meaning we analyze an attack that successfully works against new, randomly initialized models, and is nearly imperceptible to humans, all while perturbing only a small fraction of the training data. 
Previous poisoning attacks against deep neural networks in this setting have been limited in scope and success, working only in simplified settings or being prohibitively expensive for large datasets.
The central mechanism of the new attack is matching the gradient direction of malicious examples. We analyze why this works, supplement with practical considerations. and show its threat to real-world practitioners, finding that it is the first poisoning method to cause targeted misclassification in modern deep networks trained from scratch on a full-sized, poisoned ImageNet dataset.
Finally we demonstrate the limitations of existing defensive strategies against such an attack, concluding that data poisoning is a credible threat, even for large-scale deep learning systems.
\end{abstract}
\section{Introduction}

Machine learning models have quickly become the backbone of many applications from photo processing on mobile devices and ad placement to security and surveillance \citep{lecun_deep_2015}. These applications often rely on large training datasets that aggregate samples of unknown origins, and the security implications of this are not yet fully understood \citep{papernot_marauders_2018}. Data is often sourced in a way that lets malicious outsiders contribute to the dataset, such as scraping images from the web, farming data from website users, or using large academic datasets scraped from social media \citep{taigman_deepface:_2014}. {\em Data Poisoning} is a security threat in which an attacker makes imperceptible changes to data that can then be disseminated through social media, user devices, or public datasets without being caught by human supervision. The goal of a poisoning attack is to modify the final model to achieve a malicious goal. In this work we focus on targeted attacks that achieve mis-classification of some predetermined target data as in \citet{suciu_when_2018,shafahi_poison_2018}, effectively implementing a backdoor that is only triggered for a specific image. Yet, other potential goals of the attacker can include denial-of-service \citep{steinhardt_certified_2017,shen_tensorclog:_2019}, concealment of users \citep{shan_fawkes:_2020}, or introduction of fingerprint information \citep{lukas_deep_2020}. These attacks are applied in scenarios such as social recommendation \citep{hu_targeted_2019}, content management \citep{li_data_2016,fang_poisoning_2018}, algorithmic fairness \citep{solans_poisoning_2020} and biometric recognition \citep{lovisotto_biometric_2019}.     Accordingly, industry practitioners ranked data poisoning as the most serious attack on ML systems in a recent survey of corporations \citep{kumar_adversarial_2020}. 

We show that efficient poisoned data causing targeted misclassfication can be created even in the setting of deep neural networks trained on large image classification tasks, such as ImageNet \citep{russakovsky_imagenet_2015}. Previous work on targeted data poisoning has often focused on either linear classification tasks \citep{biggio_poisoning_2012,xiao_is_2015,koh_stronger_2018} or poisoning of transfer learning and fine tuning  \citep{shafahi_poison_2018,koh_understanding_2017} rather than a full end-to-end training pipeline. Attacks on deep neural networks (and especially on ones trained from scratch) have proven difficult in  \citet{munoz-gonzalez_towards_2017} and \citet{shafahi_poison_2018}. Only recently were targeted attacks against neural networks retrained from scratch shown to be possible in \cite{huang_metapoison:_2020} for CIFAR-10 - however with costs that render scaling to larger datasets, like the ImageNet dataset, prohibitively expensive.

\begin{figure}[t]
    \centering
    \includegraphics[width=0.9\linewidth]{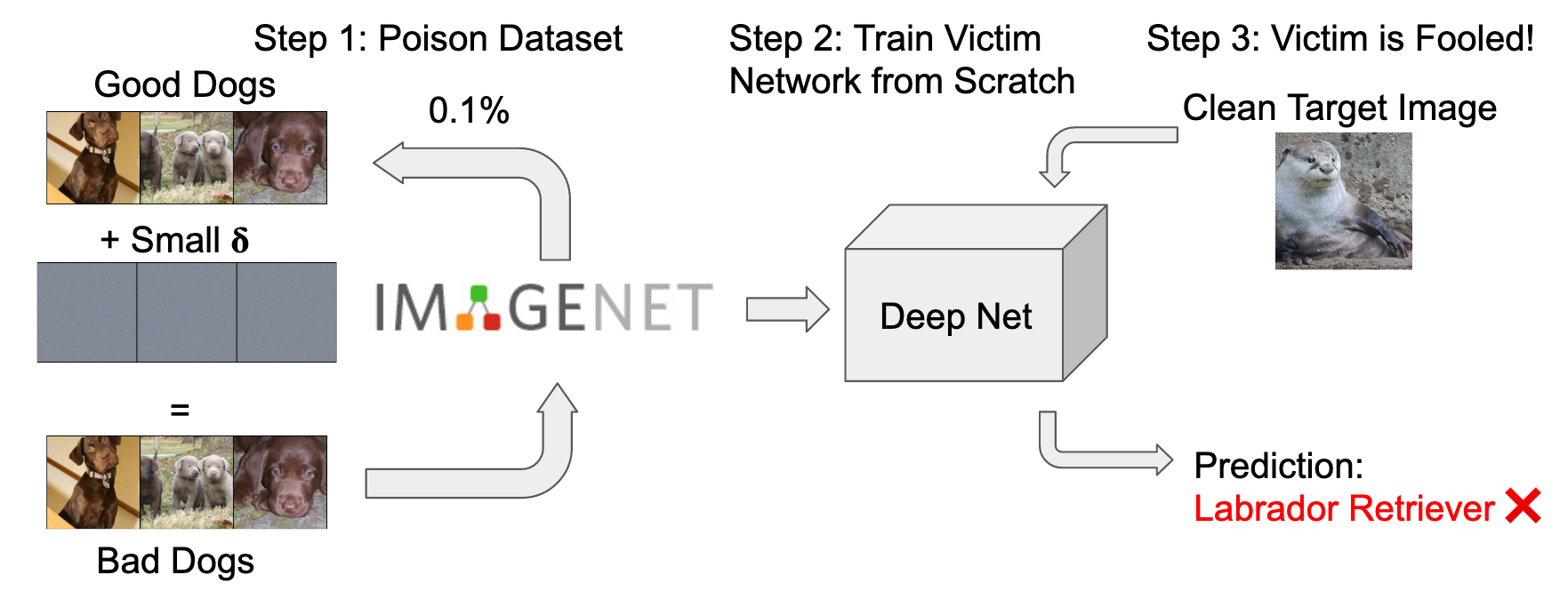}  
    \caption{The poisoning pipeline. Poisoned images (labrador retriever class) are inserted into a dataset and cause a newly trained victim model to mis-classify a target (otter) image. We show successful poisons for a threat model where $0.1\%$ of training data is changed within an $\ell_\infty$ bound of $\varepsilon = 8$. Further visualizations of poisoned data can be found in the appendix.}
    \label{fig:teaser2}
\end{figure}

We formulate targeted data poisoning as the problem of solving a \textit{gradient matching} problem
and analyze the resulting novel attack algorithm that scales to unprecedented dataset size and effectiveness. Crucially, the new poisoning objective is orders-of-magnitude more efficient than a previous formulation based on on meta-learning \citep{huang_metapoison:_2020} and succeeds more often.
We conduct an experimental evaluation, showing that poisoned datasets created by this method are robustly compromised and significantly outperform other attacks on CIFAR-10 on the benchmark of \citet{schwarzschild_just_2020}. 
We then demonstrate reliably successful attacks on common ImageNet models in realistic training scenarios. For example, the attack successfully compromises a ResNet-34 by manipulating only $0.1\%$ of the data points with perturbations less than 8 pixel values in $\ell_\infty$-norm. 
We close by discussing previous defense strategies and how strong differential privacy \citep{abadi_deep_2016}  is the only existing defense that can partially mitigate the effects of the attack.

\section{Related Work}
\label{sec:related}
The task of data poisoning is closely related to the problem of adversarial attacks at test time, also referred to as evasion attacks  \citep{szegedy_intriguing_2013,madry_towards_2017}, where the attacker alters a target test image to fool an already-trained model.  This attack is applicable in scenarios where the attacker has control over the target image, but not over the training data. 
In this work we are specifically interested in \textit{targeted} data poisoning attacks -- attacks which aim to cause a specific target test image (or set of target test images) to be mis-classified. For example, an attack may cause a certain target image of a otter not part of the training set to be classified as a dog by victim models at test time.  This attack is difficult to detect, because it does not noticeably degrade either training or validation accuracy \citep{shafahi_poison_2018,huang_metapoison:_2020} and is effectively invisible until it is triggered. From a security standpoint, these attacks break the integrity of a machine learning model and are as such also called \textit{poison integrity} attacks in \citet{barreno_security_2010} - in contrast to \textit{poison availability} attacks which reduce validation accuracy in general and are not a focus of this work.

In comparison to evasion attacks, targeted data poisoning attacks generally consider a setting where the attacker can modify training data within limits, but cannot modify test data and chooses specific target data a-priori.
A related intermediary between data poisoning attacks we consider and evasion attacks are backdoor trigger attacks \citep{turner_clean-label_2018,saha_hidden_2019}. These attacks involve inserting a trigger -- often an image patch -- into training data, which is later activated by also applying the trigger to test images.
Backdoor attacks require perturbations to both training and test-time data -- the more permissive threat model is a trade-off that allows for unknown target images.

Two basic schemes for targeted poisoning 
are label flipping \citep{barreno_security_2010,paudice_label_2019}, and watermarking  \citep{suciu_when_2018, shafahi_poison_2018}. In label flipping attacks, an attacker is allowed to change the label of examples, whereas in a watermarking attack, the attacker perturbs the training image, not label, by superimposing a target image onto training images.
These attacks can be successful, yet they are easily detected by supervision such as  \citet{papernot_deep_2018}. This is in contrast to \textit{clean-label} attacks which maintain the semantic labels of data.

Mathematically speaking, data poisoning is a {\em bilevel} optimization problem \citep{bard_explicit_1982,biggio_poisoning_2012}; the attacker optimizes image pixels to enforce (malicious) criteria on the resulting network parameters, which are themselves the solution to an ``inner'' optimization problem that minimizes the training objective.    Direct solutions to the bilevel problem of data poisoning have been proposed where feasible, for
example, SVMs in \citet{biggio_poisoning_2012} or logistic regression in \citet{demontis_why_2019}. However, direct optimization of the poisoning objective is intractable for deep neural networks because it requires backpropagating through the entire SGD training procedure, see \cite{munoz-gonzalez_towards_2017}. 
As such, the bilevel objective has to be approximated. Recently, MetaPoison \citep{huang_metapoison:_2020} proposed to approximately solve the bi-level problem based on methods from the meta-learning community \citep{finn_model-agnostic_2017}. The bilevel gradient is approximated by backpropagation through several unrolled gradient descent steps. This is the first attack to succeed against deep networks trained from scratch on CIFAR-10 as well as providing transferability to other models. 
Yet, \cite{huang_metapoison:_2020} uses a complex loss function averaged over a wide range of models trained to different epochs and a single unrolling step necessarily involves both clean and poisoned data, making it roughly as costly as one epoch of standard training. With an ensemble of 24 models, \cite{huang_metapoison:_2020} requires 3 (2 unrolling steps + 1 clean update step) x 2 (backpropagation through unrolled steps) x 60 (first-order optimization steps) x 24 (ensemble of models) equivalent epochs of normal training to attack, as well as ($\sum_{k=0}^{23} k = 253$) epochs of pretraining. All in all, this equates to 8893 training epochs. While this can be mitigated by smart caching and parallelization strategies, unrolled ensembles remain costly.

In contrast to bilevel approaches stand heuristics for data poisoning of neural networks. The most prominent heuristic is {\em feature collision}, as in Poison Frogs \citep{shafahi_poison_2018}, which seeks to cause a target test image to be misclassified by perturbing training data to collide with the target image in feature space. Modifications surround the target image in feature space with a convex polytope \citep{zhu_transferable_2019} or collection of poisons \citep{aghakhani_bullseye_2020} and consider model ensembles \citep{zhu_transferable_2019}. These methods are efficient, but designed to attack fine-tuning scenarios where the feature extractor is nearly fixed and not influenced by poisoned data. When applied to deep networks trained from scratch, their performance drops significantly. 

\section{Efficient Poison Brewing} 
In this section, we will discuss an intriguing weakness of neural network training based on first-order optimization and derive an attack against it. This attack modifies training images that so they produce a \textit{malicious gradient signal} during training, even while appearing inconspicuous. This is done by matching the gradient of the target images within $\ell^\infty$  bounds. Because neural networks are trained by gradient descent, even minor modifications of the gradients can be incorporated into the final model.

This attack compounds the strengths of previous schemes, allowing for data poisoning as efficiently as in Poison Frogs \citep{shafahi_poison_2018}, requiring only a single pretrained model and a time budget on the order of one epoch of training for optimization - but still capable of poisoning the from-scratch setting considered in \cite{huang_metapoison:_2020}. This combination allow an attacker to "brew" poisons that successfully attack realistic models on ImageNet.

\subsection{Threat Model}\label{sec:threat}
These discussed components of a clean-label targeted data poisoning attack fit together into the following exemplary threat scenario: Assume a security system that classifies luggage images. An attacker wants this system to classify their particular piece of luggage, the \textit{target} as safe, but can modify only a small part of the training set. The attacker modifies this subset to be \textit{clean-label} poisoned. Although the entire training set is validated by a human observer, the small subset of minorly modified images pass cursory inspection and receive their correct label. The security system is trained on secretly compromised data, evaluated on validation data as normal and deployed. Until the target is evaluated and mis-classified as safe, the system appears to be working fine.

Formally, we define two parties, the \textit{attacker}, which has limited control over the training data, and the \textit{victim}, which trains a model based on this data. 
We first consider a gray-box setting, where the attacker has knowledge of the model architecture used by their victim. The attacker is permitted to poison a fraction of the training dataset (usually less than 1\%) by changing images within an $\ell_\infty$-norm $\varepsilon$-bound (e.g. with $\varepsilon\leq 16$). This constraint enforces \textit{clean-label} attacks, meaning that the semantic label of a poisoned image is still unchanged. The attacker has no knowledge of the training procedure - neither about the initialization of the victim's model, nor about the (randomized) mini-batching and data augmentation that is standard in the training of deep learning models.

We formalize this threat model as bilevel problem for a machine learning model $F(x, \theta)$ with inputs $x \in \R^n$ and parameters $\theta \in \R^p$, and loss function $\mathcal{L}$. We denote the $N$ training samples by $(x_i, y_i)_{i=1}^N$, from which a subset of $P$ samples are poisoned.  For notation simplicity we assume the first $P$ training images are poisoned by adding a perturbation $\Delta_{i}$ to the $i^{th}$ training image. The perturbation is constrained to be smaller than $\varepsilon$ in the $\ell_\infty$-norm. The task is to optimize $\Delta$ so that a set of $T$ target samples $(x^t_i, y^t_i)_{i=1}^T$ is reclassified with the new adversarial labels $y^\text{adv}_i$:
 \begin{align}\label{eq:bilevel}
     \min_{\Delta \in \mathcal{C}} \sum_{i=1}^T \mathcal{L} \left( F(x^t_i, \theta(\Delta)), y^\text{adv}_i\right) \st \theta(\Delta) \in \argmin_\theta \frac{1}{N}\sum_{i=1}^N \mathcal{L}(F(x_i + \Delta_i, \theta), y_i).
 \end{align}
 We subsume the constraints in the set $\mathcal{C} = \lbrace \Delta \in \R^{N \times n} : ||\Delta||_\infty \leq \varepsilon, \Delta_i = 0 \ \forall i > P \rbrace$. We call the main objective on the left the \textit{adversarial loss}, and the objective that appears in the constraint on the right is the \textit{training loss}. 
 For the remainder, we consider a single target image ($T=1$) as in \citet{shafahi_poison_2018}, but stress that this is not a general limitation as shown in the appendix.

\subsection{Motivation}
What is the optimal alteration of the training set that causes a victim neural network $F(x, \theta)$ to mis-classify a specific target image $x^t$? 
We know that the expressivity of deep networks allows them to fit arbitrary training data \citep{zhang_understanding_2016}. Thus, if an attacker was unconstrained, 
a straightforward way to cause targeted mis-classification of an image is to insert the target image, with the incorrect label $y^\text{adv}$, into the victim network’s training set. Then, when the victim minimizes the training loss they simultaneously minimize the adversarial loss, based on the gradient information about the target image. In our threat model however, the attacker is not able to insert the mis-labeled target. They can, however, still mimic the gradient of the target by creating poisoned data whose training gradient correlates with the adversarial target gradient. If the attacker can enforce
\begin{equation}\label{eq:gradient_matching}
    \nabla_\theta \mathcal{L}(F(x^t, \theta), y^\text{adv}) \approx \frac{1}{P} \sum_{i=1}^P \nabla_\theta \mathcal{L}(F(x_i + \Delta_i, \theta), y_i)
\end{equation}
to hold for any $\theta$ encountered during training, then the victim's gradient steps that minimize the training loss on the poisoned data (right hand side) will also minimize the attackers adversarial loss on the targeted data (left side). 

\subsection{The Central Mechanism: Gradient Alignment}\label{sec:gradient_alignment}
Gradient magnitudes vary dramatically across different stages of training, and so finding poisoned images that satisfy \cref{eq:gradient_matching} for all $\theta$ encountered during training is infeasible. Instead we \textit{align} the target and poison gradients in the same direction, that is we minimize their negative cosine similarity. 
We do this by taking a clean model $F$ with parameters $\theta$, keeping $\theta$ fixed, and then optimizing
\begin{equation}
\mathcal{B}(\Delta, \theta) = 1-~\frac{\big \langle \nabla_\theta \mathcal{L}(F(x^t, \theta), y^\text{adv}), \sum_{i=1}^P\nabla_\theta \mathcal{L}(F(x_i + \Delta_i, \theta), y_i) \big \rangle}{\Vert \nabla_\theta \mathcal{L}(F(x^t, \theta), y^\text{adv}) \Vert \cdot \Vert \sum_{i=1}^P\nabla_\theta \mathcal{L}(F(x_i + \Delta_i, \theta), y_i) \Vert }.
\label{eq:bp_loss}
\end{equation}

\begin{algorithm}[b]
\caption{Poison Brewing via the discussed approach.}
\label{alg:bp}
\begin{algorithmic}[1]
    \STATE {\bfseries Require} Pretrained clean network $\{F(\cdot, \theta)\}$, 
    a training set of images and labels $(x_i, y_i)_{i=1}^N$, a target $(x^t, y^\text{adv})$, $P < N$ poison budget, perturbation bound $\varepsilon$,  restarts $R$, optimization steps $M$ 
    \STATE {\bfseries Begin}
     \STATE Select $P$ training images with label $y^\text{adv}$ 
    \STATE {\bfseries For} $r = 1, \dots, R$ restarts:
    \STATE \quad Randomly initialize perturbations $\Delta^r \in \mathcal{C}$
    \STATE \quad {\bfseries For} $j = 1, \dots, M$ optimization steps:
    \STATE \quad \quad Apply data augmentation to all poisoned samples $(x_i+\Delta^r_i)_{i=1}^P$
    \STATE \quad \quad Compute the average costs, $\mathcal{B}(\Delta^r, \theta)$ as in  \cref{eq:bp_loss}, over all poisoned samples  
    \STATE \quad \quad Update $\Delta^r$ with a step of signed Adam and project onto $||\Delta^r||_\infty \leq \varepsilon$
    \STATE Choose the optimal $\Delta^*$ as $\Delta^r$ with minimal value in $\mathcal{B}(\Delta^r,\theta)$ 
    \STATE {\bfseries Return} Poisoned dataset $(x_i + \Delta^*_i, y_i)_{i=1}^N$
\end{algorithmic}
\end{algorithm}
We optimize $\mathcal{B}(\Delta)$ using signed Adam updates with decaying step size, projecting onto $\mathcal{C}$ after every step.  This produces an alignment between the averaged poison gradients and the target gradient. In contrast to Poison Frogs, all layers of the network are included (via their parameters) in this objective, not just the last feature layer. 

Each optimization step of this attack requires only a \textit{single} differentiation of the parameter gradient w.r.t to its input to compute the objective, instead of requiring the computation and evaluation of several unrolled steps as in MetaPoison to compute the objective. 
Furthermore, as in Poison Frogs we differentiate through a loss that only involves the (small) subset of poisoned data instead of involving the entire dataset, such that the attack is especially fast if the budget is small. As a side effect this also means that precise knowledge of the full training set is not required, only the poisoned subset and a trained parameter vector $\theta$. Finally, the method is able to create poisons using only a single parameter vector, $\theta$ (like Poison Frogs in fine-tuning setting, but not the case for MetaPoison) and does not require updates of this parameter vector after each poison optimization step. 
\begin{remark}
We find cosine similarity to be exceedingly effective for the classification models considered in this work. However, for small (and especially thin) models there exists a regime in which computing the matching term using squared Euclidean loss between gradients, directly optimizing \cref{eq:gradient_matching}, is a stronger attack. We conduct an ablation study in \cref{fig:ablation_viz}, showing this for thin ResNets.
\end{remark}

\subsection{Making attacks that transfer and succeed ``in the wild''}\label{sec:wild}
A practical and robust attack must be able to poison different random initializations of network parameters and a variety of architectures. To this end, we employ several techniques: \\
\textit{\textbf{Differentiable Data Augmentation and Resampling:}}
Data augmentation is a standard tool in deep learning, and transferable image perturbations must survive this process.  
At each step minimizing \cref{eq:bp_loss}, we randomly draw a translation,  crop, and possibly a horizontal flip for each poisoned image, then use bilinear interpolation to resample to the original resolution. When updating $\Delta$, we differentiate through this grid sampling operation as in \citet{jaderberg_spatial_2015}. This creates an attack which is robust to data augmentation and leads to increased transferability.
\\
\textit{\textbf{Restarts:}} The efficiency we gained in \cref{sec:gradient_alignment} allows us to incorporate restarts, a common technique in the creation of evasion attacks \citep{qin_adversarial_2019,mosbach_logit_2019}. We minimize \cref{eq:bp_loss} several times from random starting perturbations, and select the set of poisons that give us the lowest alignment loss $\mathcal{B}(\Delta)$. This allows us to trade off reliability with computational effort.
\\
\textit{\textbf{Model Ensembles:}} A known approach to improving transferability is to attack an ensemble of model instances trained from different initializations \citep{liu_delving_2017,zhu_transferable_2019,huang_metapoison:_2020}. However, ensembles are highly expensive, increasing the pre-training cost for only a modest, but stable, increase in performance. 

We show the effects of these techniques via CIFAR-10 experiments (see \cref{tab:ablation_data} and \cref{subsec:baseline}). To keep the attack within practical reach, we do not consider ensembles for our experiments on ImageNet data, opting for the cheaper techniques of restarts and data augmentation. A summarizing description of the attack can be found in \cref{alg:bp}. Lines 8 and 9 of \cref{alg:bp} are done in a stochastic (mini-batch) setting (which we omitted in \cref{alg:bp} for notation simplicity).

\section{Theoretical Analysis}\label{sec:analysis}
Can gradient alignment cause network parameters to converge to a model with low adversarial loss? 
To simplify presentation, we denote the adversarial loss and  normal training loss of \cref{eq:bilevel} as 
$
\mathcal{L}_\text{adv}(\theta) =: \mathcal{L}(F((x^t, \theta), y^\text{adv})$  and $\mathcal{L}(\theta) =: \frac{1}{N} \sum_{i=1}^N \mathcal{L}(x_i,y_i, \theta),
$
respectively. Also, recall that
$1-\mathcal{B}\left(\Delta, \theta^k \right)$, defined in \cref{eq:bp_loss}, measures the cosine similarity between the gradient of the adversarial loss and the gradient of normal training loss.
We adapt a classical result of Zoutendijk \citep[Thm. 3.2]{nocedal_numerical_2006} to shed light on why data poisoning can work even though the victim only performs standard training on a poisoned dataset:
\begin{proposition}[Adversarial Descent]\label{thm:passenger-descent}
Let $\mathcal{L}_\text{adv}(\theta)$ be bounded below and have a Lipschitz continuous gradient with constant $L > 0$ and assume that the victim model is trained by gradient descent with step sizes $\alpha_k$, i.e. $\theta^{k+1} = \theta^k - \alpha_k \nabla \mathcal{L}(\theta^k)$. 
If the gradient descent steps $\alpha_k >0$ satisfy 
\begin{equation}
    \alpha_k L < 
      \beta\left(1 - \mathcal{B}(\Delta, \theta^k) \right)\frac{||\nabla \mathcal{L}(\theta^k)||}{||\nabla \mathcal{L}_\text{adv}(\theta^k)||}\label{eq:step_size_bound}
\end{equation}
for some fixed $\beta < 1$, then $\mathcal{L}_{adv}(\theta^{k+1}) < \mathcal{L}_{adv}(\theta^{k})$. 
If in addition $\exists \varepsilon > 0,\ k_0$ so that $\forall k \geq k_0$,~$\mathcal{B}(\Delta, \theta^k) < 1- \varepsilon$, then
\begin{equation}
    \lim_{k \to \infty} ||\nabla \mathcal{L}_\text{adv}(\theta^k)|| \to 0.
\end{equation}
\end{proposition}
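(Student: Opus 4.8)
The plan is to run the standard ``descent lemma'' argument underlying Zoutendijk's theorem, but applied to $\mathcal{L}_\text{adv}$ along the direction supplied by the \emph{training} gradient rather than its own. First I would use $L$-Lipschitz continuity of $\nabla\mathcal{L}_\text{adv}$ together with the update $\theta^{k+1}-\theta^k = -\alpha_k\nabla\mathcal{L}(\theta^k)$ to obtain the quadratic upper bound
\begin{equation*}
\mathcal{L}_\text{adv}(\theta^{k+1}) \leq \mathcal{L}_\text{adv}(\theta^k) - \alpha_k \langle \nabla\mathcal{L}_\text{adv}(\theta^k), \nabla\mathcal{L}(\theta^k)\rangle + \frac{L\alpha_k^2}{2}\Vert\nabla\mathcal{L}(\theta^k)\Vert^2 .
\end{equation*}
The next step is to rewrite the cross term through the cosine similarity: by the definition of $\mathcal{B}$ in \cref{eq:bp_loss}, $\langle\nabla\mathcal{L}_\text{adv},\nabla\mathcal{L}\rangle = (1-\mathcal{B}(\Delta,\theta^k))\,\Vert\nabla\mathcal{L}_\text{adv}(\theta^k)\Vert\,\Vert\nabla\mathcal{L}(\theta^k)\Vert$, which turns the estimate into a statement purely about the two gradient norms and the alignment $1-\mathcal{B}$.

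For the first claim I would factor the two non-constant terms as $\alpha_k\Vert\nabla\mathcal{L}(\theta^k)\Vert\big[(1-\mathcal{B})\Vert\nabla\mathcal{L}_\text{adv}\Vert - \tfrac{L\alpha_k}{2}\Vert\nabla\mathcal{L}\Vert\big]$ and note that the bracket is positive exactly when $\alpha_k L < 2(1-\mathcal{B}(\Delta,\theta^k))\,\Vert\nabla\mathcal{L}_\text{adv}(\theta^k)\Vert/\Vert\nabla\mathcal{L}(\theta^k)\Vert$, which is the step-size hypothesis of \cref{eq:step_size_bound} (the factor $2$ and the constant $\beta<1$ are interchangeable). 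Under this condition the bracket is strictly positive, so the right-hand side lies strictly below $\mathcal{L}_\text{adv}(\theta^k)$, giving $\mathcal{L}_\text{adv}(\theta^{k+1}) < \mathcal{L}_\text{adv}(\theta^k)$.

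For the limit I would convert the per-step decrease into a summable ``sufficient decrease''. Taking $\alpha_k$ at a fixed fraction of the permitted bound collapses the bracket into a Zoutendijk-type decrement $\mathcal{L}_\text{adv}(\theta^k) - \mathcal{L}_\text{adv}(\theta^{k+1}) \geq c\,(1-\mathcal{B}(\Delta,\theta^k))^2\,\Vert\nabla\mathcal{L}_\text{adv}(\theta^k)\Vert^2$ with $c = c(L,\beta)>0$. Invoking the uniform alignment bound $\mathcal{B}(\Delta,\theta^k) < 1-\varepsilon$ for $k\geq k_0$ lower-bounds $(1-\mathcal{B})^2$ by $\varepsilon^2$, so each decrement is at least $c\varepsilon^2\Vert\nabla\mathcal{L}_\text{adv}(\theta^k)\Vert^2$. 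Since $\mathcal{L}_\text{adv}$ is monotone decreasing (first part) and bounded below, the telescoping sum $\sum_{k\geq k_0}\big(\mathcal{L}_\text{adv}(\theta^k)-\mathcal{L}_\text{adv}(\theta^{k+1})\big)$ converges, forcing $\sum_k \Vert\nabla\mathcal{L}_\text{adv}(\theta^k)\Vert^2 < \infty$ and hence $\Vert\nabla\mathcal{L}_\text{adv}(\theta^k)\Vert \to 0$.

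I expect the last conversion to be the main obstacle. The hypothesis only caps $\alpha_k$ from above, and an upper bound alone permits vanishing steps that stall progress while nominally ``descending''; to legitimately extract the $\Theta\big((1-\mathcal{B})^2\Vert\nabla\mathcal{L}_\text{adv}\Vert^2\big)$ decrement one needs the steps to realize a definite fraction of the bound, the analogue of the Wolfe/Armijo sufficient-decrease requirement in the classical theorem. I would therefore state the convergence part under the reading that $\alpha_k$ is proportional to the admissible bound, and I would flag that for the quadratic term to be genuinely dominated the norm ratio in \cref{eq:step_size_bound} should be $\Vert\nabla\mathcal{L}_\text{adv}\Vert/\Vert\nabla\mathcal{L}\Vert$ rather than its reciprocal.
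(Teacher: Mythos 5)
Your proof is correct and follows essentially the same route as the paper's: the descent lemma for $\mathcal{L}_\text{adv}$ along the training-gradient step, the cosine identity to rewrite the cross term via $1-\mathcal{B}$, and then a telescoping argument using boundedness from below (the paper explicitly presents the result as an adaptation of Zoutendijk's theorem). Moreover, both of your concluding flags are confirmed by the paper's own proof. First, the paper derives descent under the condition $\alpha_k L < \cos(\gamma^k)\,\Vert\nabla\mathcal{L}_\text{adv}(\theta^k)\Vert / \Vert\nabla\mathcal{L}(\theta^k)\Vert$, i.e.\ with exactly the ratio you advocate, so the reciprocal ratio printed in \cref{eq:step_size_bound} is an error in the statement, not in your analysis (your extra factor of $2$ only reflects that you kept the $\tfrac{1}{2}$ in the descent lemma, which the paper drops). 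Second, the paper obtains its per-step sufficient decrease by ``reinserting'' the step-size upper bound into the descent inequality as though $\alpha_k$ realized a fixed fraction of that bound; this is precisely the Armijo/Wolfe-type lower bound on the steps that you make explicit, and without some such assumption the limit claim would indeed fail, since vanishing $\alpha_k$ satisfy the stated hypothesis while stalling the iteration. One cosmetic point: the paper writes the resulting decrement as $\Vert\nabla\mathcal{L}_\text{adv}(\theta^k)\Vert^2\cos(\gamma^k)/(c'L)$, whereas the computation (like yours) actually produces $\cos^2(\gamma^k)$; this discrepancy is immaterial once $\cos(\gamma^k)\geq\varepsilon$ is assumed, as both arguments then conclude identically.
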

\begin{proof}
See supp. material.
\end{proof}
Put simply, our poisoning method aligns the gradients of training loss and adversarial loss. This enforces that the gradient of the main objective is a descent direction for the adversarial objective, which,  when combined with conditions on the step sizes, causes a victim to unwittingly converge to a stationary point of the adversarial loss, i.e. optimize \textit{the original bilevel objective} locally.

The strongest assumption in \Cref{thm:passenger-descent}  is that gradients are almost always aligned, $\mathcal{B}(\Delta, \theta^k) < 1-\epsilon, k \geq k_0$. We directly maximize alignment during creation of the poisoned data, but only for a selected $\theta^*$, and not for all $\theta^k$ encountered during gradient descent from any possible initialization. 
\begin{wrapfigure}[17]{r}{0.5\textwidth}
   \vspace{-0.25cm} 
  \begin{center}
    \includegraphics[trim={0 0 0 0cm},clip,width=0.45\textwidth]{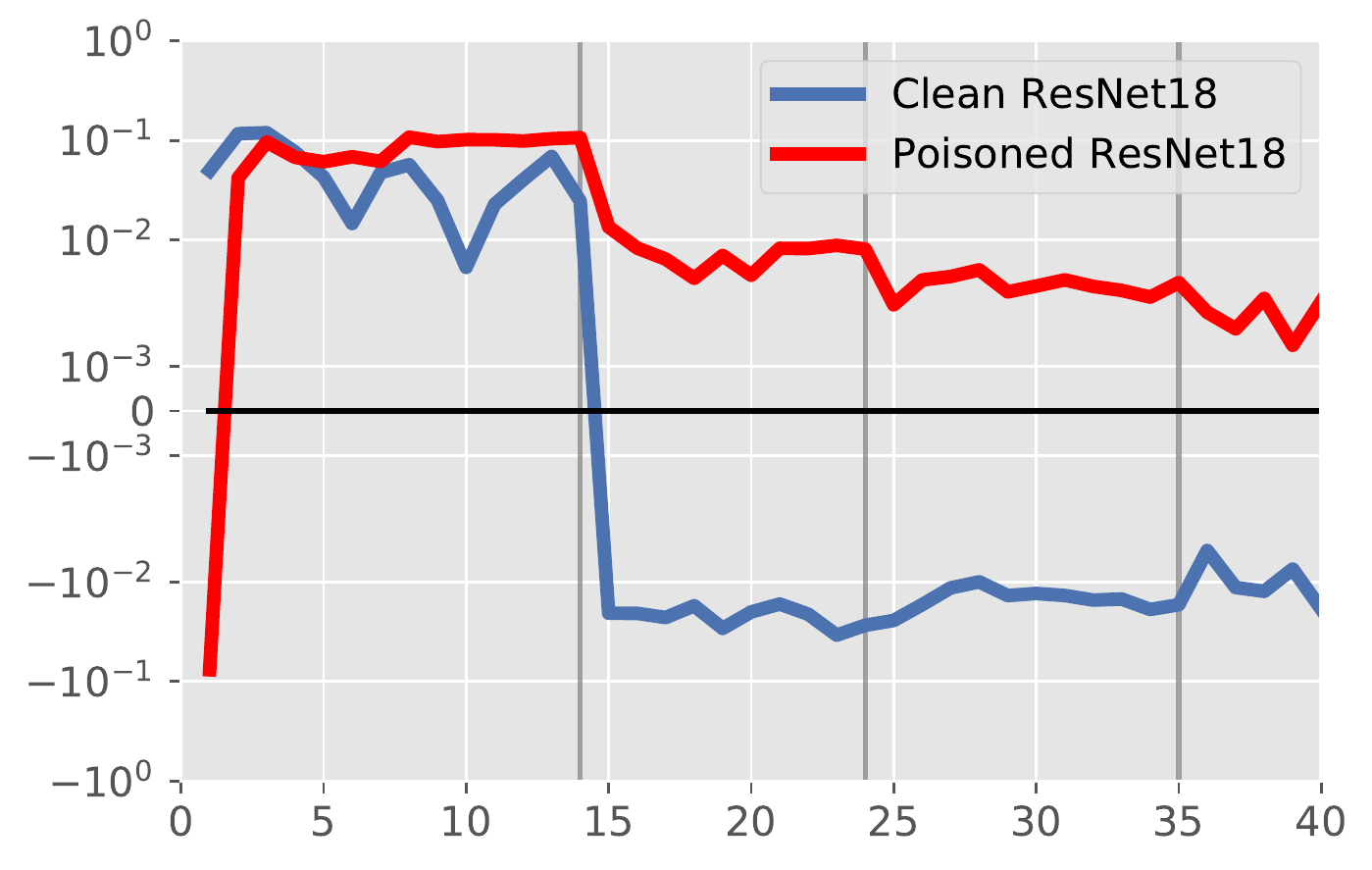}
  \end{center}
  \vspace{-0.5cm}
  \caption{Average batch cosine similarity, per epoch, between the adversarial gradient $\nabla \mathcal{L}_\text{adv}(\theta)$ and the gradient of each mini-batch $\nabla \mathcal{L}(\theta)$ 
    for a poisoned and a clean ResNet-18. 
    Crucially, the gradient alignment is strictly positive.
    \label{fig:alignment} }
\end{wrapfigure}
However, poison perturbations made from one parameter vector, $\theta$, can transfer to other parameter vectors encountered during training. 
For example, if one allows larger perturbations,
and in the limiting case, unbounded perturbations, our objective is minimal if the poison data is identical to the target image,
which aligns training and adversarial gradients at every $\theta$ encountered.
Empirically, we see that the proposed "poison brewing" attack does indeed increase gradient alignment.  In \cref{fig:alignment}, we see that in the first phase of training all alignments are positive, but only the poisoned model maintains a positive similarity for the adversarial target-label gradient throughout  training. The clean model consistently shows that these angles are negatively aligned - i.e. normal training on a clean dataset will increase adversarial loss. However, after the inclusion of poisoned data, the gradient alignment is modified enough to change the prediction for the target. 

\section{Experimental Evaluation}
We evaluate poisoning approaches in each experiment by sampling 10 random poison-target cases. We compute poisons for each and evaluate them on 8 newly initialized victim models (see supp. material Sec. A.1 for details of our methodology). We refer to only the correct classification of each target as its adversarial class as a success and report as \textit{avg. poison success} the average success rate over all 10 cases, each including 8 poisoned models.
We apply \cref{alg:bp} with the following hyperparameters for all our experiments: $\tau=0.1$, $R=8$, $M=250$. We train victim models in a realistic setting, considering data augmentation, SGD with momentum, weight decay and learning rate drops. Code for all experiments can be found at \url{https://github.com/JonasGeiping/poisoning-gradient-matching}.
\subsection{Evaluations on CIFAR-10}
\label{subsec:baseline}
\begin{wrapfigure}[13]{r}{0.5\textwidth}
    \vspace{-1.5cm}
  \begin{center}
    \includegraphics[trim={0 0 0 2cm},clip,width=0.5\textwidth]{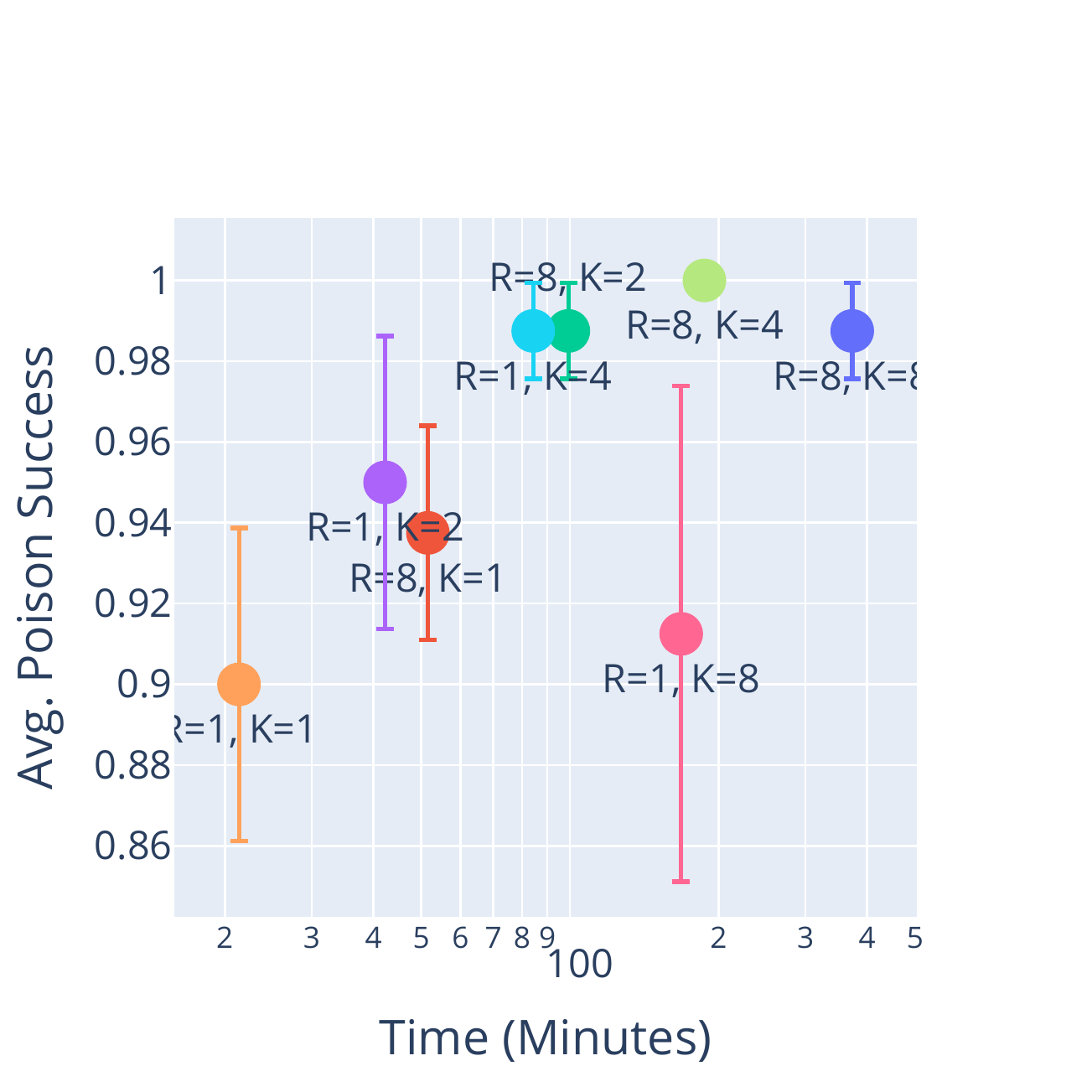}
  \end{center}
\end{wrapfigure}
As a baseline on CIFAR-10, the inset figure (right) visualizes the number of restarts $R$ and the number of ensembled models $K$, showing that the proposed method is successful in creating poisons even with just a single model (instead of an ensemble). The inset figure shows poison success versus time necessary to compute the poisoned dataset for a budget of $1\%$, $\varepsilon=16$ on CIFAR-10 for a ResNet-18. We find that as the number of ensemble models, $K$, increases, it is beneficial to increase the number of restarts as well, but increasing the number of restarts independently also improves performance. We validate the differentiable data augmentation discussed in \cref{sec:wild} in \cref{tab:ablation_data}, finding it crucial for scalable data poisoning, being as efficient as a large model ensemble in facilitating robustness.

Next, to test different poisoning methods, we fix our "brewing" framework of efficient data poisoning, with only a single network and diff. data augmentation. We evaluate the discussed gradient matching cost function, replacing it with either the feature-collision objective of Poison Frogs or the bullseye objective of \citet{aghakhani_bullseye_2020}, thereby effectively replicating their methods, but in our context of from-scratch training.

The results of this comparison are collated in \cref{tab:comparison}. While Poison Frogs and Bullseye succeeded in finetuning settings, we find that their feature collision objectives are only successful in the shallower network in the from-scratch setting. Gradient matching further outperforms MetaPoison on CIFAR-10, while faster (see appendix), in particular as $K=24$ for MetaPoison.\\ 
\begin{table}[t]
\centering
\caption{CIFAR-10 ablation. $\varepsilon=16$, budget is $1\%$. Differentiable data augmentation is able to replace a large 8-model ensemble, without increasing computational effort.}

\begin{tabular}{c  c  c  c  c}
 \textbf{Ensemble} & \textbf{Diff. Data Aug.} & \textbf{Victim does data aug.} & \textbf{Poison Accuracy} ($\% (\pm \text{SE}$))\\ 
\hline
1 & \text{\sffamily X} & \text{\sffamily X} & 100.00\% ($\pm 0.00$)  \\
1 & \text{\sffamily X} & \checkmark & 32.50\% ($\pm 12.27$) \\ 
8 &  \text{\sffamily X} & \text{\sffamily X}& 78.75\% ($\pm 11.77$) \\ 
1 & \checkmark & \checkmark  & 91.25\% ($\pm 6.14$)  \\
\end{tabular}
\label{tab:ablation_data}
\end{table}
\begin{table}
    \centering
    \caption{CIFAR-10 Comparison to other poisoning objectives with a budget of $1\%$ within our framework (columns 1 to 3), for a 6-layer ConvNet and an 18-layer ResNet. MetaPoison* denotes the full framework of \citet{huang_metapoison:_2020}. Each cell shows the avg. poison success and its standard error. }
    \label{tab:comparison}
    \begin{tabular}{r c c c | c}
         \textbf{} & Proposed & Bullseye & Poison Frogs & MetaPoison*\\
         \hline
         \textbf{ConvNet} ($\varepsilon=32$) & \emph{86.25}\% ($\pm 9.43$) & 78.75\% ($\pm 7.66$) & 52.50\% ($\pm 12.85$)& 35.00\% ($\pm 11.01$)\\
         \textbf{ResNet-18} ($\varepsilon=16$)& \emph{90.00}\% ($\pm 3.87$) & 3.75\% ($\pm 3.56$) & 1.25\% ($\pm 1.19$) & 42.50 \% ($\pm 8.33$) \\
    \end{tabular}
\end{table}
\textbf{Benchmark results on CIFAR-10:} To evaluate our results against a wider range of poison attacks, we consider the recent benchmark proposed in \citet{schwarzschild_just_2020} in \cref{tab:benchmark}. In the category "Training From Scratch", this benchmark evaluates poisoned CIFAR-10 datasets with a budget of $1\%$ and $\varepsilon=8$ against various model architectures, averaged over 100 fixed scenarios. 
We find that the discussed gradient matching attack, even for $K=1$ is significantly more potent in the more difficult benchmark setting. An additional feature of the benchmark is \textit{transferability}. Poisons are created using a ResNet-18 model, but evaluated also on two other architectures. We find that the proposed attack transfers to the similar MobileNet-V2 architecture, but not as well to VGG11. However, we also show that this advantage can be easily circumvented by using an ensemble of different models as in \citet{zhu_transferable_2019}. If we use an ensemble of $K=6$, consisting of 2 ResNet-18, 2 MobileNet-V2 and 2 VGG11 models (last row), then the same poisoned dataset can compromise all models and generalize across architectures.
\begin{table}
    \centering
    \caption{Results on the benchmark of \cite{schwarzschild_just_2020}. Avg. accuracy of poisoned CIFAR-10 (budget $1\%$, $\varepsilon=8$) over 100 trials is shown. (*) denotes rows replicated from \cite{schwarzschild_just_2020}. Poisons are created with a ResNet-18 except for the last row, where the ensemble consists of two models of each architecture.\label{tab:benchmark}}
    \begin{tabular}{r c c c c c }
         \textbf{Attack} & \textbf{ResNet-18} & \textbf{MobileNet-V2} & \textbf{VGG11} & \textbf{Average} \\
         \hline
         Poison Frogs* \citep{shafahi_poison_2018} & $0\%$ & $1\%$ & $3\%$ & $1.33\%$ \\
         Convex Polytopes* \citep{zhu_transferable_2019} & $0\%$ & $1\%$ & $1\%$ & $0.67\%$ \\
         Clean-Label Backd.* \citep{turner_clean-label_2018} & $0\%$ & $1\%$ & $2\%$ & $1.00\%$ \\
         Hidden-Trigger Backd.* \citep{saha_hidden_2019} & $0\%$ & $4\%$ & $1\%$ & $2.67\%$ \\
         \hline
         Proposed Attack ($K=1$) & $45\%$ & $36\%$ & $8\%$ & $29.67\%$ \\
         Proposed Attack ($K=4$) & $\mathbf{55\%}$ & $37\%$ & $7\%$ & $33.00\%$ \\
         Proposed Attack ($K=6$, Heterogeneous) & $49\%$ & $\mathbf{38\%}$ & $\mathbf{35\%}$ & $\mathbf{40.67\%}$ \\
    \end{tabular}
\end{table}
\begin{figure}[t]
 \begin{minipage}[b]{0.6\linewidth}
 \begin{center}
\includegraphics[width=\linewidth,height=5cm]{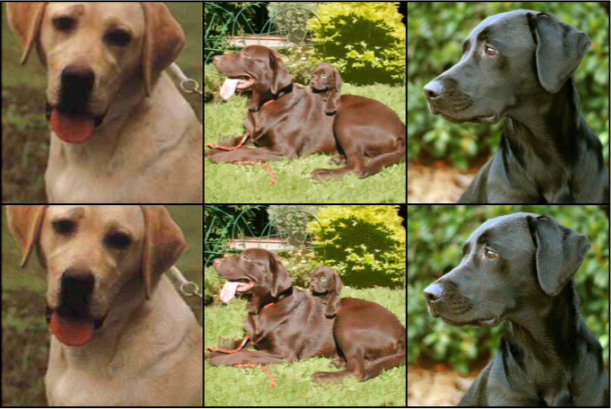}
\end{center}
\end{minipage}
\begin{minipage}[b]{0.3\textwidth}
\centering
\includegraphics[height=2.5cm]{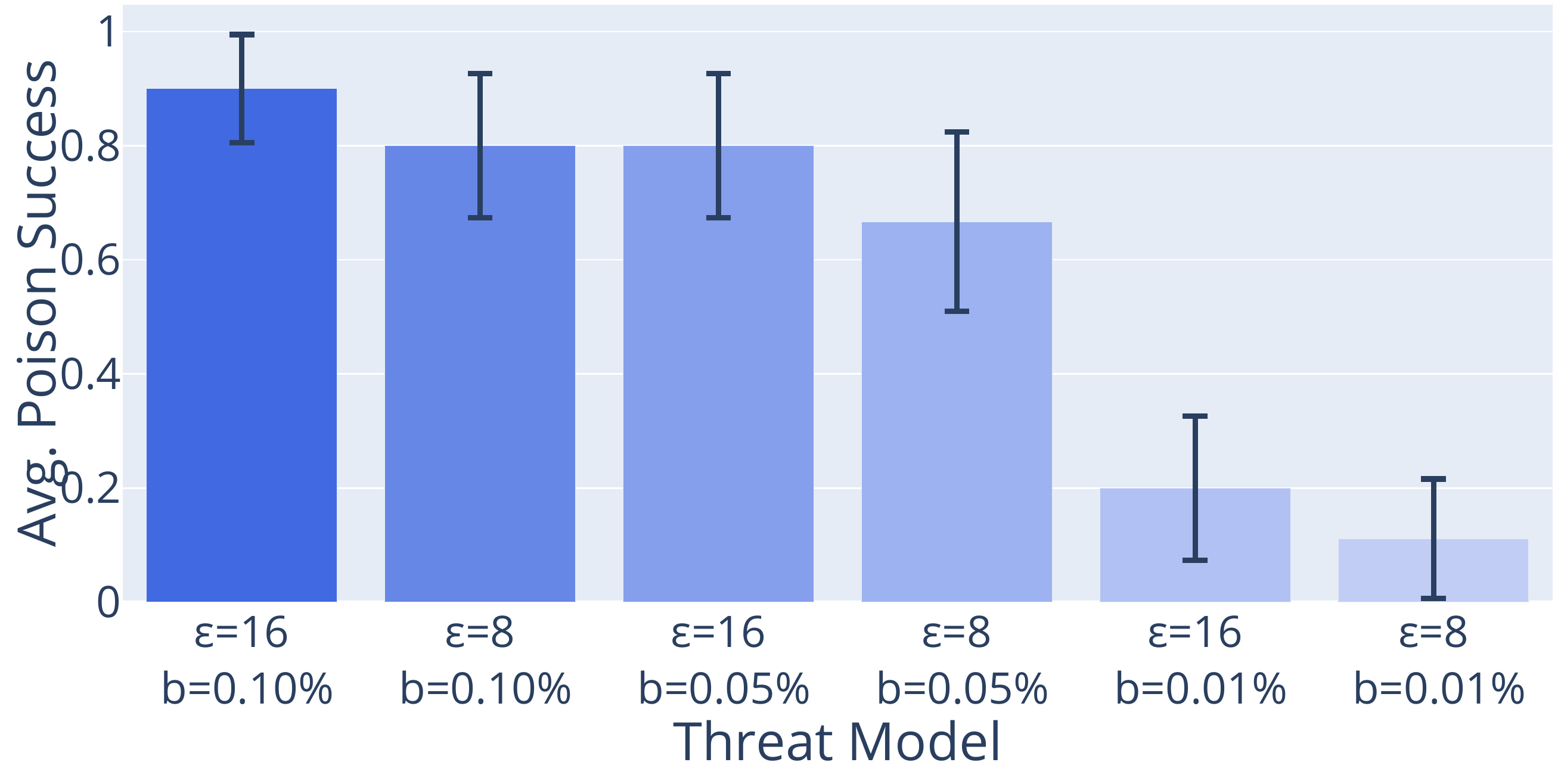}
\includegraphics[height=2.5cm]{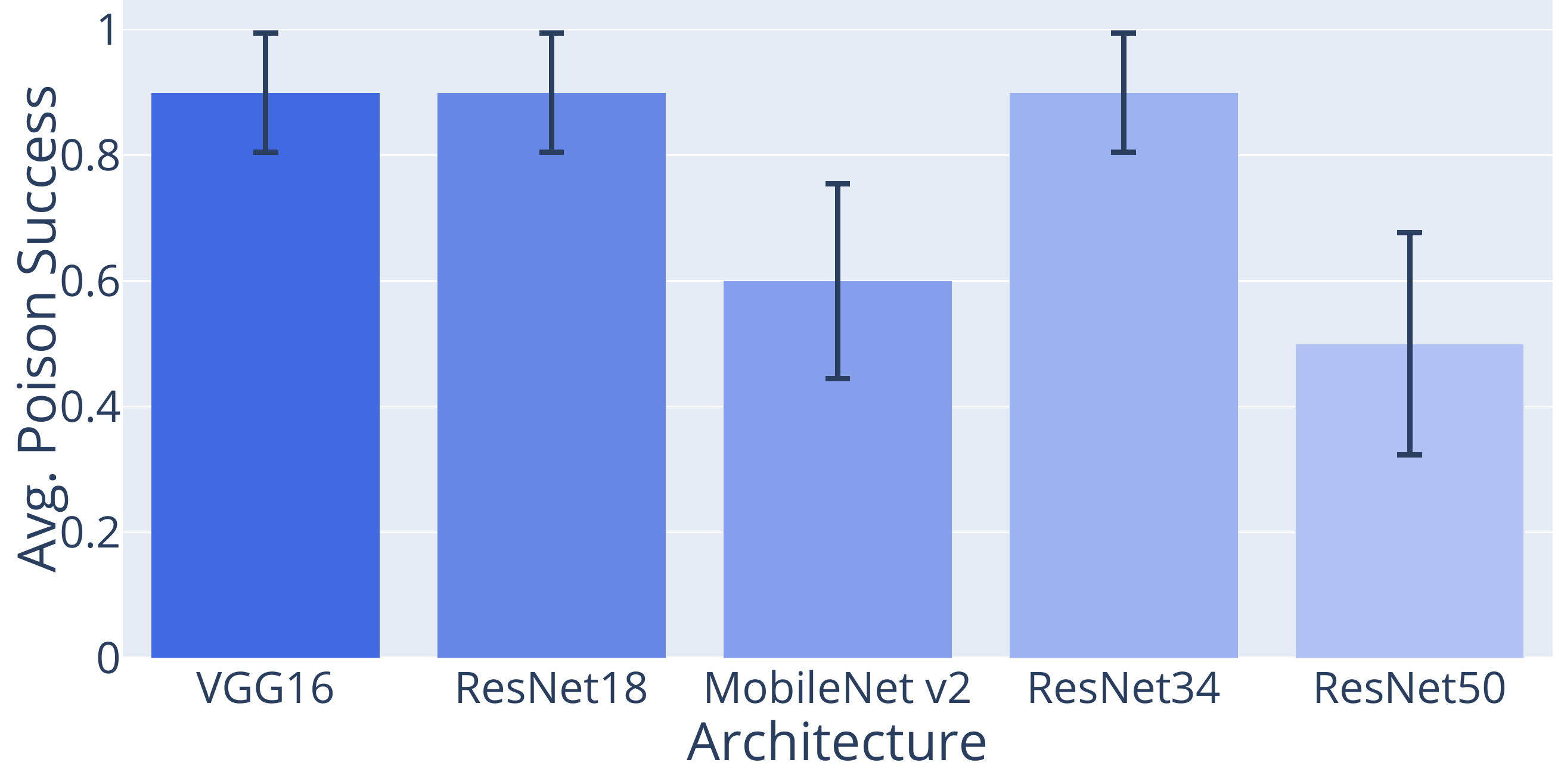}
\end{minipage}
\caption{Poisoning ImageNet. \textbf{Left}: Clean images (above), with their poisoned counterparts (below) from a successful poisoning of a randomly initialized ResNet-18 trained on ImageNet for a poison budget of $0.1\%$ and an $\ell_\infty$ bound of $\varepsilon = 8$. \textbf{Right Top}: ResNet-18 results for different budgets and varying $\varepsilon$-bounds. \textbf{Right Bot.}: More architectures \citep{simonyan_very_2014,he_deep_2015,sandler_mobilenetv2:_2018} with a budget of $0.1\%$ and $\varepsilon = 16$. }
\label{fig:imagenet}
\vspace{-0.25cm}
\end{figure}
\begin{figure}[t]
    \centering
\begin{subfigure}[t]{0.49\textwidth}
         \centering
    \includegraphics[width=1\textwidth]{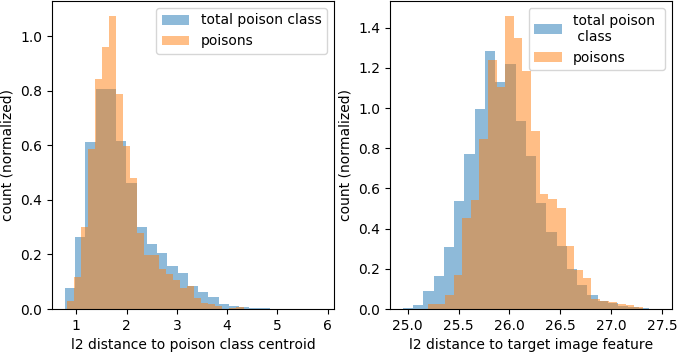}
    \caption{Feature space distance to base class centroid, and target image feature, for victim model on CIFAR-10. $4.0\%$ budget, $\varepsilon = 16$, showing sanitization defenses failing and no feature collision as in Poison Frogs. 
    }
    \label{fig:defense:a}
\end{subfigure}
\hfill
\begin{subfigure}[t]{0.49\textwidth}
     \centering
        \includegraphics[width=1\textwidth]{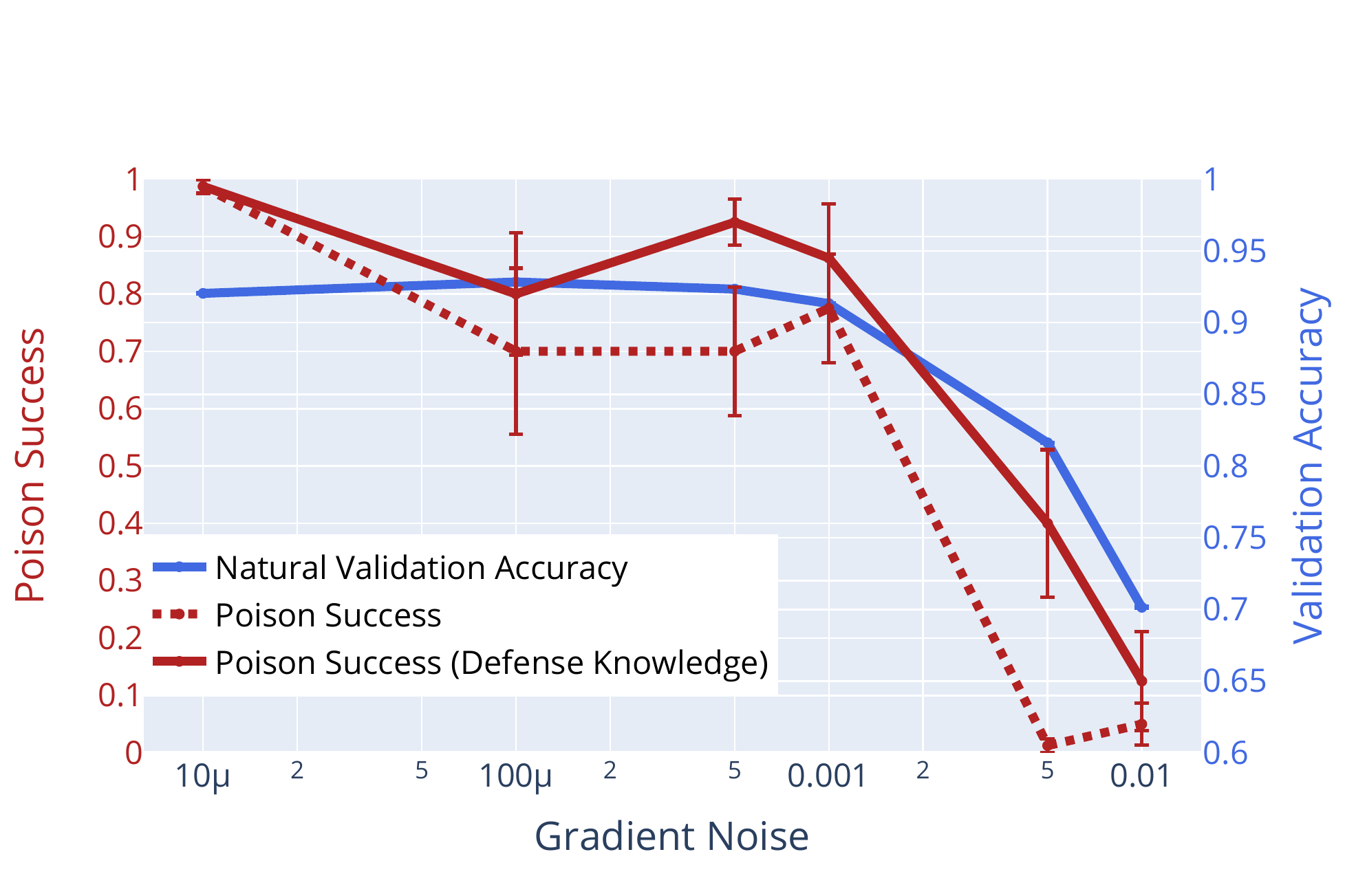}
        \caption{Defending through differential privacy. CIFAR-10, $1\%$ budget, $\varepsilon=16$, ResNet-18. Differential privacy is only able to limit the success of poisoning via trade-off with significant drops in accuracy.}
        \label{fig:defense:b}
     \end{subfigure}
    \caption{Defense strategies against poisoning.} 
    \label{fig:defense}
    \vspace{-0.25cm}
\end{figure}
\subsection{Poisoning ImageNet models}
The ILSVRC2012 challenge, "ImageNet", consists of over 1 million training examples, making it infeasible for most actors to train large model ensembles or run extensive hyperparameter optimizations. However, as the new gradient matching attack requires only a single sample of pretrained parameters $\theta$, and operates only on the poisoned subset, it can poison ImageNet images using publicly available pretrained models without ever training an ImageNet classifier. 
Poisoning ImageNet with previous methods would be infeasible. For example, following the calculations in \cref{sec:related}, it would take over $500$ GPU days (relative to our hardware) to create a poisoned ImageNet for a ResNet-18 via MetaPoison. In contrast, the new attack can poison ImageNet in less than four GPU hours. 

\Cref{fig:imagenet} shows that a standard ImageNet models trained from scratch on a poisoned dataset "brewed" with the discussed attack, are reliably compromised - with examples of successful poisons shown (left). We first study the effect of varying poison budgets, and $\varepsilon$-bounds (top right). Even at a budget of $0.05\%$ and $\varepsilon$-bound of $8$, the attack poisons a randomly initialized ResNet-18 $80\%$ of the time. These results extend to other popular models, such as MobileNet-v2 and ResNet50 (bottom right).\\
\textbf{Poisoning Cloud AutoML:} To verify that the discussed attack can compromise models in  practically relevant \textit{black-box setting}, we test against Google's Cloud AutoML. This is a cloud framework that provides access to black-box ML models based on an uploaded dataset. In \citet{huang_metapoison:_2020} Cloud AutoML was shown to be vulnerable for CIFAR-10. We upload a poisoned ImageNet dataset (base: ResNet18, budget $0.1\%$, $\varepsilon=32$) for our first poison-target test case and upload the dataset. Even in this  scenario, the attack is measurably effective, moving the adversarial label into the top-5 predictions of the model in 5 out of 5 runs, and the top-1 prediction in 1 out of 5 runs.

\subsection{Deficiencies of Defense Strategies}
Previous defenses against data poisoning \citep{steinhardt_certified_2017, paudice_detection_2018,peri2019deep} have relied mainly on data sanitization, i.e. trying to find and remove poisons by outlier detection (often in feature space). We demonstrate why sanitization methods fail in the face of the attack discussed in this work in \cref{fig:defense:a}.
Poisoned data points are distributed like clean data points, reducing filtering based methods to almost-random guessing (see supp. material, table 6).

Differentially private training is a different defense. It diminishes the impact of individual training samples, in turn making poisoned data less effective \citep{ma_data_2019,hong_effectiveness_2020}. However, this come at a significant cost. \Cref{fig:defense:b} shows that the natural (=normal) validation accuracy is reduced significantly when gradient noise from differential privacy is large enough to affect poisoning success and that 
to push the Poison Success below 15$\%$, one has to sacrifice over $20\%$ validation accuracy, even on  CIFAR-10. Training a diff. private ImageNet model is even more challenging. 
From this aspect, differentially private training can be compared to adversarial training \citep{madry_towards_2017} against evasion attacks. Both methods can mitigate the effectiveness of an adversarial attack, but only by significantly impeding natural accuracy. 

\section{Conclusion}
We investigate targeted data poisoning via gradient matching and discover that this mechanism allows for data poisoning attacks against fully retrained models that are unprecedented in scale and effectiveness. We motivate the attack theoretically and empirically, discuss additional mechanisms like differentiable data augmentation and experimentally investigate modern deep neural networks in realistic training scenarios, showing that gradient matching attacks compromise even models trained on ImageNet. We close with discussing the limitations of current defense strategies.

\bibliography{zotero_library,references_manual}
\bibliographystyle{iclr2021_conference}

\subsubsection*{Acknowledgements}
 We thank the University of Maryland Institute for Advanced Computer Studies, specifically Matthew Baney and Liam Monahan, for their help with the Center for Machine Learning cluster. 
 
 This work was supported by DARPA's GARD, QED4RML, and Young Faculty Award program. Additional support was provided by the National Science Foundation Directory of Mathematical Sciences. GT is supported by ONR grant N0001420WX00239, as well as the DoD HPC Modernization Program. WC is supported by NSF DMS 1738003 and MURI from the Army Research Office - Grant No. W911NF-17-1-0304.

\appendix
\section{Remarks}
\begin{remark}[Validating the approach in a special case] 
Inner-product loss functions like \cref{eq:bp_loss} work well in other contexts. In \cite{geiping_inverting_2020}, cosine similarity between image gradients was minimized to uncover training images used in federated learning.  If we disable our constraints, setting $\varepsilon=255$, and consider a single poison image and a single target, then we minimize the problem of recovering image data from a normalized gradient as a special case. In \cite{geiping_inverting_2020}, it was shown that minimizing this problem can recover the target image. This means that we can indeed return to the motivating case in the unconstrained setting - the optimal choice of poison data is insertion of the target image in an unconstrained setting for one image.
\end{remark}

\begin{remark}[Transfer of gradient alignment]
An analysis of how gradient alignment often transfers between different parameters and even between architectures has been conducted, e.g. in  \cite{charpiat_input_2019,koh_understanding_2017} and \cite{demontis_why_2019}. It was shown in \cite{demontis_why_2019} that the performance loss when transferring an evasion attack to another model is governed by the gradient alignment of both models. In the same vein, optimizing alignment appears to be a useful metric in the case of data poisoning. Furthermore \cite{hong_effectiveness_2020} note that previous poisoning algorithms might already cause gradient alignment as a side effect, even without explicitly optimizing for it.
\end{remark}

\begin{remark}[Poisoning is a Credible Threat to Deep Neural Networks]
It is important to understand the security impacts of using unverified data sources for deep network training.  Data poisoning attacks up to this point have been limited in scope. Such attacks focus on limited settings such as poisoning SVMs, attacking transfer learning models, or attacking toy architectures \citep{biggio_poisoning_2012, munoz-gonzalez_poisoning_2019, shafahi_poison_2018}. We demonstrate that data poisoning poses a threat to large-scale systems as well.
The approach discussed in this work pertains only to the classification scenario, as a guinea pig for data poisoning, but applications to a variety of scenarios of practical interest have been considered in the literature, for example spam detectors mis-classifying a spam email as benign, or poisoning a face unlock based mobile security systems.

The central message of the data poisoning literature can be described as follows: From a security perspective, the data that is used to train a machine learning model should be under the same scrutiny as the model itself. These models can only be secure if the entire data processing pipeline is secure. This issue further cannot easily be solved by human supervision (due to the existence of clean-label attacks) or outlier detection (see \cref{fig:defense:a}). Furthermore, targeted poisoning is difficult to detect as validation accuracy is unaffected. As such, data poisoning is best mitigated by fully securing the data pipeline.

So far we have considered data poisoning from the industrial side. From the perspective of a user, or individual under surveillance, however, data poisoning can be a means of securing personal data shared on the internet, making it unusable for automated ML systems. For this setting, we especially refer to an interesting application study in \cite{shan_fawkes:_2020} in the context of facial recognition.
\end{remark}

\section{Experimental Setup}\label{sec:experimental}
This appendix section details our experimental setup for replication purposes. A central question in the context of evaluating data poisoning methods is how to judge and evaluate "average" performance. Poisoning is in general volatile with respect to poison-target class pair, and to the specific target example, with some combinations and target images being in general easier to poison than others. However, evaluating all possible combinations is infeasible for all but the simplest datasets, given that poisoned data has to created for each example and then a neural network has to be trained from scratch every time. 
Previous works \citep{shafahi_poison_2018,zhu_transferable_2019} have considered select target pairs, e.g. "birds-dogs" and "airplanes-frogs", but this runs the risk of mis-estimating the overall success rates. Another source of variability arises, especially in the from-scratch setting: Due to both the randomness of the initialization of the neural network, the randomness of the order in which images are drawn during mini-batch SGD, and the randomness of data augmentations, a fixed poisoned dataset might only be effective some of the time, when evaluating it multiple times. 

In light of this discussion, we adopt the following methodology: For every experiment we randomly select $n$ (usually 10 in our case) settings consisting of a random target class, random poison class, a random target and random images to be poisoned. For each of these experiments we create a single poisoned dataset by the discussed or a comparing method within limits of the given threat model and then evaluate the poisoned datasets $m$ times ($8$ for CIFAR-10 and $1$ for ImageNet) on random re-initializations of the considered architecture. To reduce randomness for a fair comparison between different runs of this setup, we fix the random seeds governing the experiment and rerun different threat models or methods with the same random seeds. We have used CIFAR-10 with random seeds $1000000000$-$1111111111$ hyperparameter tuning and now evaluate on random seeds $2000000000$-$2111111111$ for CIFAR-10 experiments and $1000000000$-$1111111111$ for ImageNet, with class pairs and target image IDs for reproduction given in \cref{tab:class_pairs_cifar10,tab:class_pairs_imagenet}. For CIFAR-10, the target ID refers to the canonical order of all images in the dataset ( as downloaded from \url{https://www.cs.toronto.edu/~kriz/cifar.html}); for ImageNet, the ID refers to an order of ImageNet images where the syn-sets are ordered by their increasing numerical value (as is the default in \texttt{torchvision}).
However for future research we encourage the sampling of new target-poison pairs to prevent overfitting, ideally even in larger numbers given enough compute power.

For every measurement of \textit{avg. poison success} in the paper, we measure in the following way: After retraining the given deep neural network to completion, we measure if the target image is successfully classified by the network as its adversarial class. We do not count mere misclassification of the original label (but note that this usually happens even before the target is incorrectly classified by the adversarial class). Over the $m$ validation runs we repeat this measurement of target classification success and then compute the average success rate for a single example. We then aggregate this average over our 10 chosen random experiments and report the mean and standard error of these average success rates as \textit{avg. poison success}. All error bars in the paper refer to standard error of these measurements.

\begin{table}[ht]
    \centering
    \caption{Target/poison class pairs generated from the initial random seeds for ImageNet experiments. Target ID relative to CIFAR-10 validation dataset.}
    \begin{tabular}{c c c c}
    \textbf{Target Class} & \textbf{Poison Class} & \textbf{Target ID} & \textbf{Random Seed}\\ \hline
    dog	& frog &8745 & 2000000000\\ 
    frog	& truck & 1565 & 2100000000\\ 
    frog	& bird & 2138 & 2110000000\\ 
    airplane & dog & 5036 & 2111000000\\ 
    airplane & ship & 1183& 2111100000\\ 
    cat & airplane & 7352& 2111110000 \\
    automobile & frog & 3544&2111111000\\ 
    truck & cat & 3676 & 2111111100\\
    automobile & ship & 9882 & 2111111110\\ 
    automobile & cat & 3028 &2111111111\\ 
    \end{tabular} 
    \label{tab:class_pairs_cifar10}
\end{table}

\begin{table}[ht]
    \centering
    \caption{Target/poison class pairs generated from the initial random seeds for ImageNet experiments. Target Id relative to ILSVRC2012 validation dataset \cite{russakovsky_imagenet_2015}}
    \begin{tabular}{c c c c}
    \textbf{Target Class} & \textbf{Poison Class} & \textbf{Target ID} & \textbf{Random Seed}\\ \hline
    otter	& Labrador retriever &18047 & 1000000000\\ 
    warthog	& bib & 17181 & 1100000000\\ 
    orange	&radiator & 37530& 1110000000\\ 
    theater curtain & maillot & 42720 & 1111000000\\ 
    hartebeest & capuchin & 17580& 1111100000\\ 
    burrito & plunger & 48273& 1111110000 \\
    jackfruit & spider web & 47776&1111111000\\ 
    king snake & hyena & 2810 & 1111111100\\
    flat-coated retriever & alp & 10281& 1111111110\\ 
    window screen & hard disc & 45236&1111111111\\ 
    \end{tabular} 
    \label{tab:class_pairs_imagenet}
\end{table}
 
\subsection{Hardware}
We use a heterogeneous mixture of hardware for our experiments. CIFAR-10, and a majority of the ImageNet experiments, were run on NVIDIA GEFORCE RTX 2080 Ti gpus. CIFAR-10 experiments were run on $1$ gpu, while ImageNet experiments were run on $4$ gpus. We also use NVIDIA Tesla P100 gpus for some ImageNet experiments. All timed experiments were run using 2080 Ti gpus. 
\subsection{Models}
For our experiments on CIFAR-10 in section 5 we consider two models. In table 2, the "6-layer ConvNet", - in close association with similar models used in \citet{finn_model-agnostic_2017} or \citet{krizhevsky_imagenet_2012}, we consider an architecture of 5 convolutional layers (with kernel size 3), followed by a linear layer. All convolutional layers are followed by a ReLU activation. The last two convolutional layers are followed by max pooling with size 3. The output widths of these layers are given by $64, 128, 128, 256, 256, 2304$. In tables 1, 2, in the inset figure and Fig. 4 we consider a ResNet-18 model. We make the customary changes to the model architecture for CIFAR-10, replacing the stem of the original model (which requires ImageNet-sized images) by a convolutional layer of size 3, following by batch normalization and a ReLU. This is effectively equal to upsampling the CIFAR-10 images before feeding them into the model.
For experiments on ImageNet, we consider ResNet-18, ResNet-34 \citep{he_deep_2015}, MobileNet-v2 \citep{sandler_mobilenetv2:_2018} and VGG-16 \citep{simonyan_very_2014} in standard configuration.

We train the ConvNet, MobileNet-v2 and VGG-16 with initial learning rate of $0.01$ and the residual architectures with initial learning rate $0.1$. We train for 40 epochs, dropping the learning rate by a factor of 10 at epochs 14, 24, 35. We train with stochastic mini-batch gradient descent with Nesterov momentum, with batch size $128$ and momentum $0.9$. Note that the dataset is shuffled in each epoch, so that where poisoned images appear in mini-batches is random and not known to the attacker. We add weight decay with parameter $5 \times 10^{-4}$. For CIFAR-10 we add data augmentations using horizontal flipping with probability $0.5$ and random crops of size $32\times 32$ with zero-padding of $4$. For ImageNet we resize all images to $256 \times 256$ and crop to the central $224 \times 224$ pixels. We also consider horizontal flipping with probability $0.5$, and data augmentation with random crops of size $224 \times 224$ with zero-padding of $28$.

When evaluating ImageNet poisoning from-scratch we use the described procedure. To create our poisoned datasets as detailed in Alg. 1, we download the respective pretrained model from \texttt{torchvision}, see  \url{https://pytorch.org/docs/stable/torchvision/models.html}.

\subsection{Cloud AutoML Setup}
For the experiment using Google's cloud autoML, we upload a poisoned ILSVRC2012 dataset into google storage, and then use \url{https://cloud.google.com/vision/automl/} to train a classification model. Due to autoML limitations to 1 million images, we only upload up to 950 examples from each class (reaching a training set size slightly smaller than $950\,000$, which allows for an upload of the $50\,000$ validation images). We use a ResNet-18 model as surrogate for the black-box learning within autoML, pretrained on the full ILSVRC2012 as before. We create a \texttt{MULTICLASS} autoML dataset and specify the vision model to be \texttt{mobile-high-accuracy-1} which we train to $10\,000$ milli-node hours, five times. After training the model, we evaluate its performance on the validation set and target image. The trained models all reach a $69\%$ clean top-1 accuracy on the ILSVRC2012 validation set. 

\section{Proof of Proposition 1}
\begin{proof}[Proof of Prop. 1]
Consider the gradient descent update 
\begin{equation*}
    \theta^{k+1} = \theta^k - \alpha_k \nabla \mathcal{L}(\theta^k)
\end{equation*}
Firstly, due to Lipschitz smoothness of the gradient of the adversarial loss $\mathcal{L}_\text{adv}$ we can estimate the value at $\theta^{k+1}$ by the descent lemma 
\begin{equation*}
    \mathcal{L}_\text{adv}(\theta^{k+1}) \leq \mathcal{L}_\text{adv}(\theta^{k}) -\langle\alpha_k\nabla \mathcal{L}_\text{adv}(\theta^k), \nabla\mathcal{L}(\theta^k) \rangle + \alpha_k^2 L||\nabla \mathcal{L}(\theta^k)||^2
\end{equation*}
If we further use the cosine identity:
\begin{equation*}
    \langle\nabla \mathcal{L}_\text{adv}(\theta^k), \nabla\mathcal{L}(\theta^k) \rangle = ||\nabla \mathcal{L}(\theta^k)|| ||\nabla \mathcal{L}_\text{adv}(\theta^k)|| \cos(\gamma^k),
\end{equation*}
denoting the angle between both vectors by $\gamma^k$, we find that 
\begin{align*}
    \mathcal{L}_\text{adv}(\theta^{k+1}) &\leq \mathcal{L}_\text{adv}(\theta^{k}) -||\nabla \mathcal{L}(\theta^k)|| ||\nabla \mathcal{L}_\text{adv}(\theta^k)|| \cos(\gamma^k) + \alpha_k^2 L||\nabla \mathcal{L}(\theta^k)||^2 \\
    &= \mathcal{L}_\text{adv}(\theta^{k}) -\left(\alpha_k \frac{||\nabla \mathcal{L}_\text{adv}(\theta^k)||}{||\nabla \mathcal{L}(\theta^k)||} \cos(\gamma^k) - \alpha_k^2 L \right) ||\nabla \mathcal{L}(\theta^k)||^2
\end{align*}
As such, the adversarial loss decreases for nonzero step sizes if 
\begin{align*}
    \frac{||\nabla \mathcal{L}_\text{adv}(\theta^k)||}{||\nabla \mathcal{L}(\theta^k)||} \cos(\gamma^k) > \alpha_k L
\end{align*}
i.e.
\begin{align*}
    \alpha_k L \leq \frac{||\nabla \mathcal{L}_\text{adv}(\theta^k)||}{||\nabla \mathcal{L}(\theta^k)||} \frac{\cos(\gamma^k)}{c}
\end{align*}
for some $1 < c < \infty$. This follows from our assumption on the parameter $\beta$ in the statement of the proposition. 
Reinserting this estimate into the descent inequality reveals that
\begin{align*}
    \mathcal{L}_\text{adv}(\theta^{k+1}) &< \mathcal{L}_\text{adv}(\theta^{k}) - ||\nabla \mathcal{L}_\text{adv}||^2 \frac{\cos(\gamma^k)}{c'L},
\end{align*}
for $\frac{1}{c'} = \frac{1}{c} - \frac{1}{c^2}$.
Due to monotonicity we may sum over all descent inequalities, yielding
\begin{equation*}
     \mathcal{L}_\text{adv}(\theta^0) - \mathcal{L}_\text{adv}(\theta^{k+1}) \geq  \frac{1}{c'L}\sum_{j=0}^k ||\nabla \mathcal{L}_\text{adv}(\theta^j)||^2 \cos(\gamma^j)
\end{equation*}
As $\mathcal{L}_\text{adv}$ is bounded below, we may consider the limit of $k \to \infty$ to find
\begin{equation*}
    \sum_{j=0}^\infty ||\nabla \mathcal{L}_\text{adv}(\theta^j)||^2 \cos(\gamma^j) < \infty.
\end{equation*}
If for all, except finitely many iterates the angle between adversarial and training gradient is less than $90^\circ$, i.e. $\cos(\gamma^k)$ is bounded below by some fixed $\epsilon > 0$, as assumed, then the convergence to a stationary point follows:
\begin{equation*}
    \lim_{k \to \infty} ||\nabla \mathcal{L}_\text{adv}(\theta^k)|| \to 0
\end{equation*}

\end{proof}

In \cref{fig:bound} we visualize measurements of the computed bound from an actual poisoned training. The classical gradient descent converges only if $\alpha_k L < 1$, so we can find an upper bound to this value by $1$, even if the actual Lipschitz constant of the neural network training objective is not known to us.
\begin{figure}[ht]
    \centering
    \includegraphics[width=0.75\textwidth]{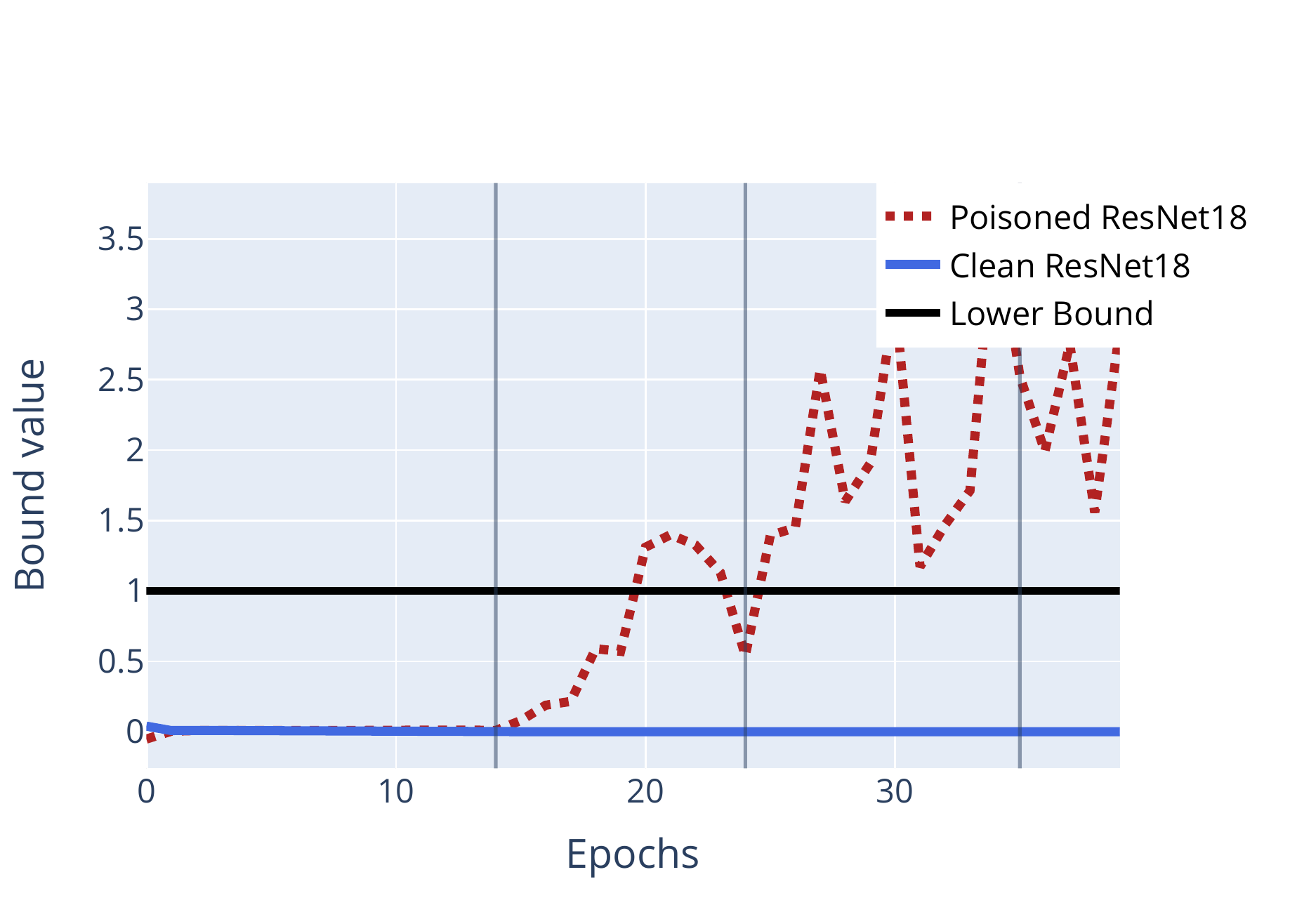}
    \caption{The bound considered in Prop. 1, evaluated during training of a poisoned and a clean model, using a practical estimation of the lower bound via $\alpha_k L \approx 1$. This is an upper bound of $\alpha_k L$ as $\alpha_k < \frac{1}{L}$ is necessary for the convergence of (clean) gradient descent.  
    }
    \label{fig:bound}
\end{figure}
\label{ap:theory}

\section{Poisoned Datasets}
We provide access to poisoned datasets as part of the supplementary material, allowing for a replication of the attack. To save space however, we provide only the subset of poisoned images and not the full dataset. We hope that this separation also aids in the development of defensive strategies. To train a model using these poisoned data points, you can use our code (using \texttt{--save full}) our your own to export either CIFAR-10 or ImageNet into an image folder structure, where the clean images can then be replaced by poisoned images according to their ID. Note that the given IDs refer to the dataset ordering as discussed above.

\begin{figure}
    \centering
    \includegraphics[width=\textwidth]{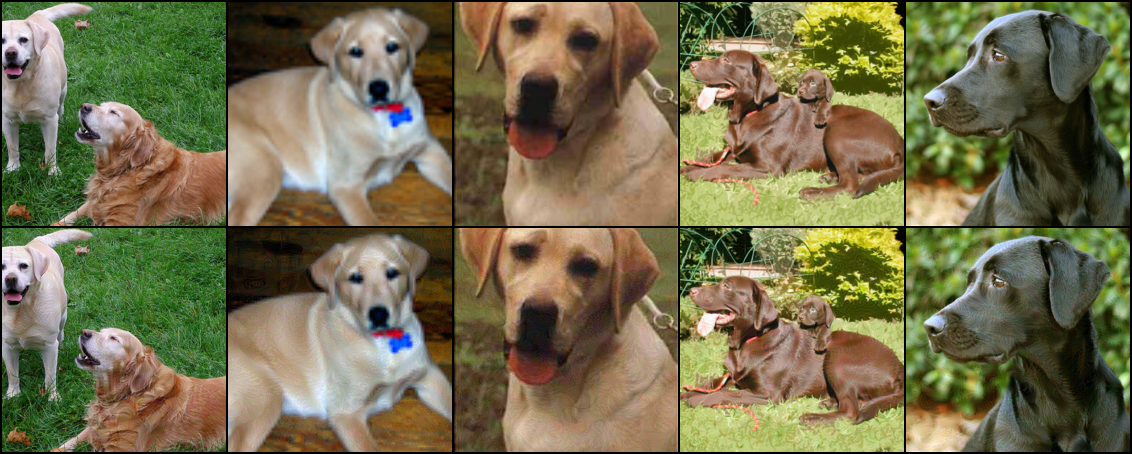}
\caption{Clean images (above), with their poisoned counterparts (below) from a successful poisoning of a ResNet-18 model trained on ImageNet. The poisoned images (taken from the Labrador Retriever class) successfully caused mis-classification of a target (otter) image under a threat model given by a budget $0.1\%$ and an $\ell_\infty$ bound of $\varepsilon = 8$.}
\label{fig:ap_viz1}
\end{figure}
\section{Visualizations}
We visualize poisoned sample from our ImageNet runs in \cref{fig:ap_viz1,fig:ap_viz2}, noting especially the "clean label" effect. Poisoned data is only barely distinguishable from clean data, even in the given setting where the clean data is shown to the observer. In a realistic setting, this is significantly harder.
A subset of poisoned images used to poison Cloud autoML with $\varepsilon=32$ can be found in \cref{fig:automl_viz}.

We concentrate only on small $\ell^\infty$ perturbations to the training data as this is the most common setting for adversarial attacks. However, there exist other choices for attacks in practical settings. Previous works have already considered additional color transformations \citep{huang_metapoison:_2020} or watermarks \citep{shafahi_poison_2018}. Most techniques that create adversarial attacks at test time within various constraints \citep{engstrom_rotation_2017,zhang_smooth_2019,heng_harmonic_2018,karmon_lavan:_2018} are likely to transfer into the data poisoning setting.  Likewise, we do not consider hiding poisoned images further by minimizing perceptual scores and relate to the large literature of adversarial attacks that evade detection \citep{carlini_adversarial_2017}.

In \cref{fig:adv_loss_acc} we visualize how the adversarial loss and accuracy behave during an exemplary training run, comparing the adversarial label with the original label of the target image.

\begin{figure}
    \centering
    \includegraphics[width=\textwidth]{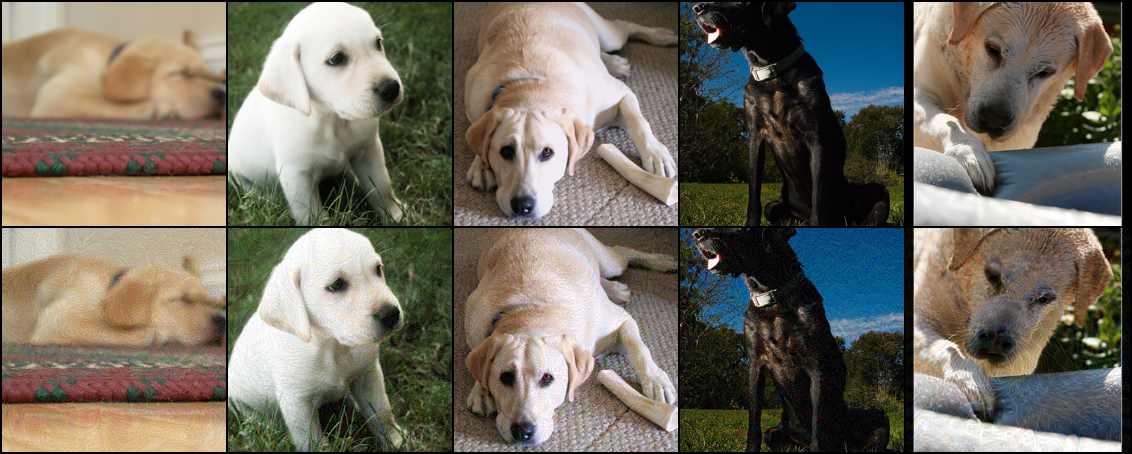}
    \caption{Clean images (above), with their poisoned counterparts (below) from a successful poisoning of a randomly initialized ResNet-18 trained on ImageNet. The poisoned images (taken from the Labrador Retriever class) successfully caused mis-classification of a target (otter) image under a threat model given by a budget of $0.1\%$ and an $\ell_\infty$ bound of $\epsilon = 16$.}
    \label{fig:ap_viz2}
\end{figure}

\begin{figure}
    \centering
    \includegraphics[width=\textwidth]{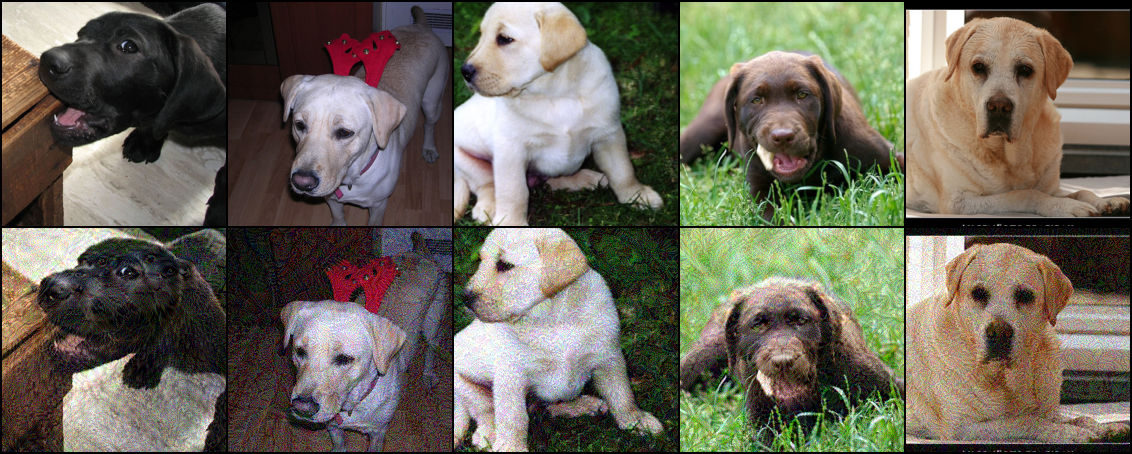}
\caption{Clean images (above), with their poisoned counterparts (below) from a successful poisoning of a Google Cloud AutoML model trained on ImageNet. The poisoned images (taken from the Labrador Retriever class) successfully caused mis-classification of a target (otter) image. This is accomplished with a poison budget of $0.1\%$ and an $\ell_\infty$ bound of $\varepsilon = 32$ - the black-box attack against autoML requires an increased perturbation magnitude, in contrast to the other gray-box experiments in this work.}
\label{fig:automl_viz}
\end{figure}

\begin{figure}[ht]
    \centering
    \includegraphics[width=0.48\textwidth]{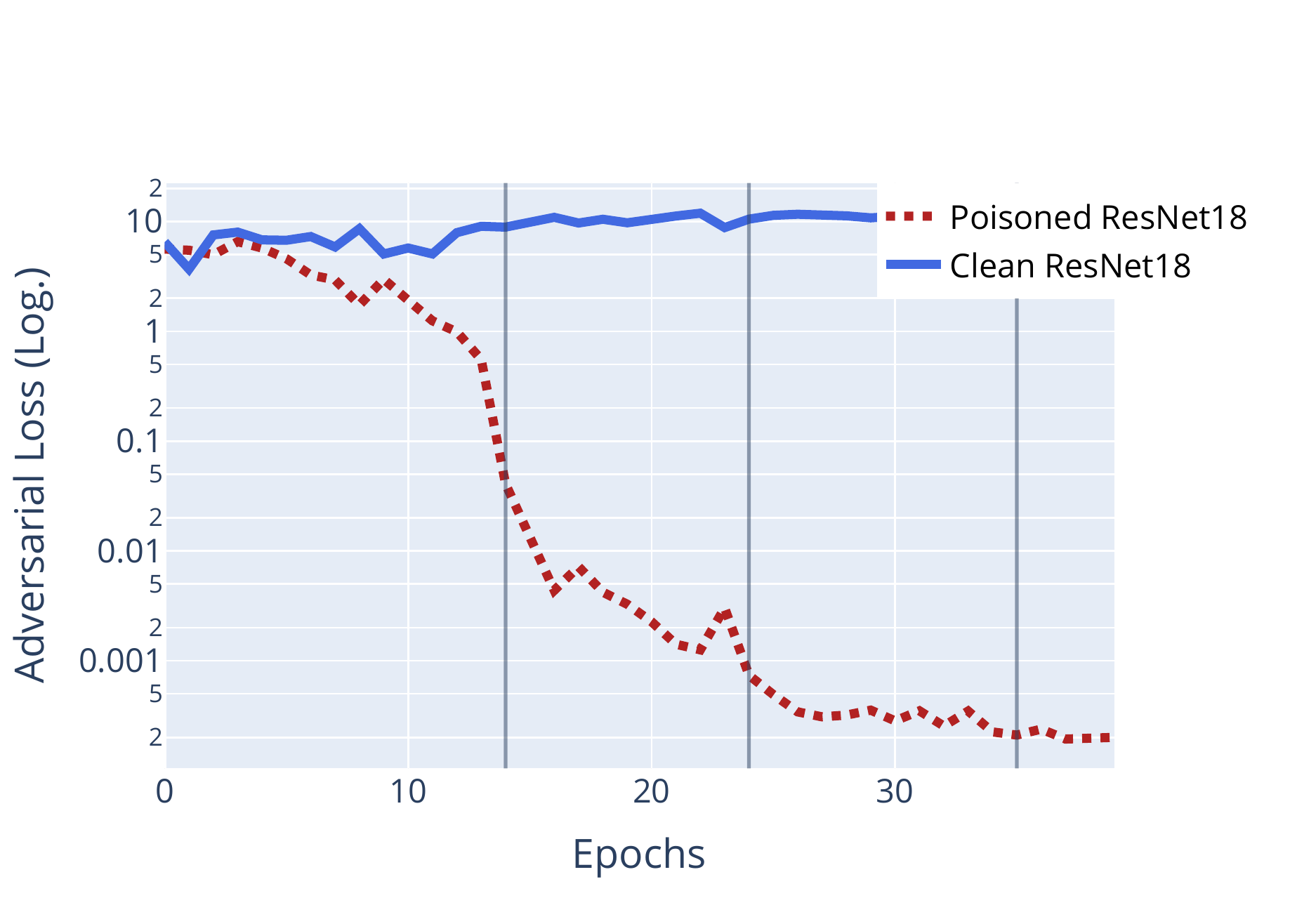}
    \includegraphics[width=0.48\textwidth]{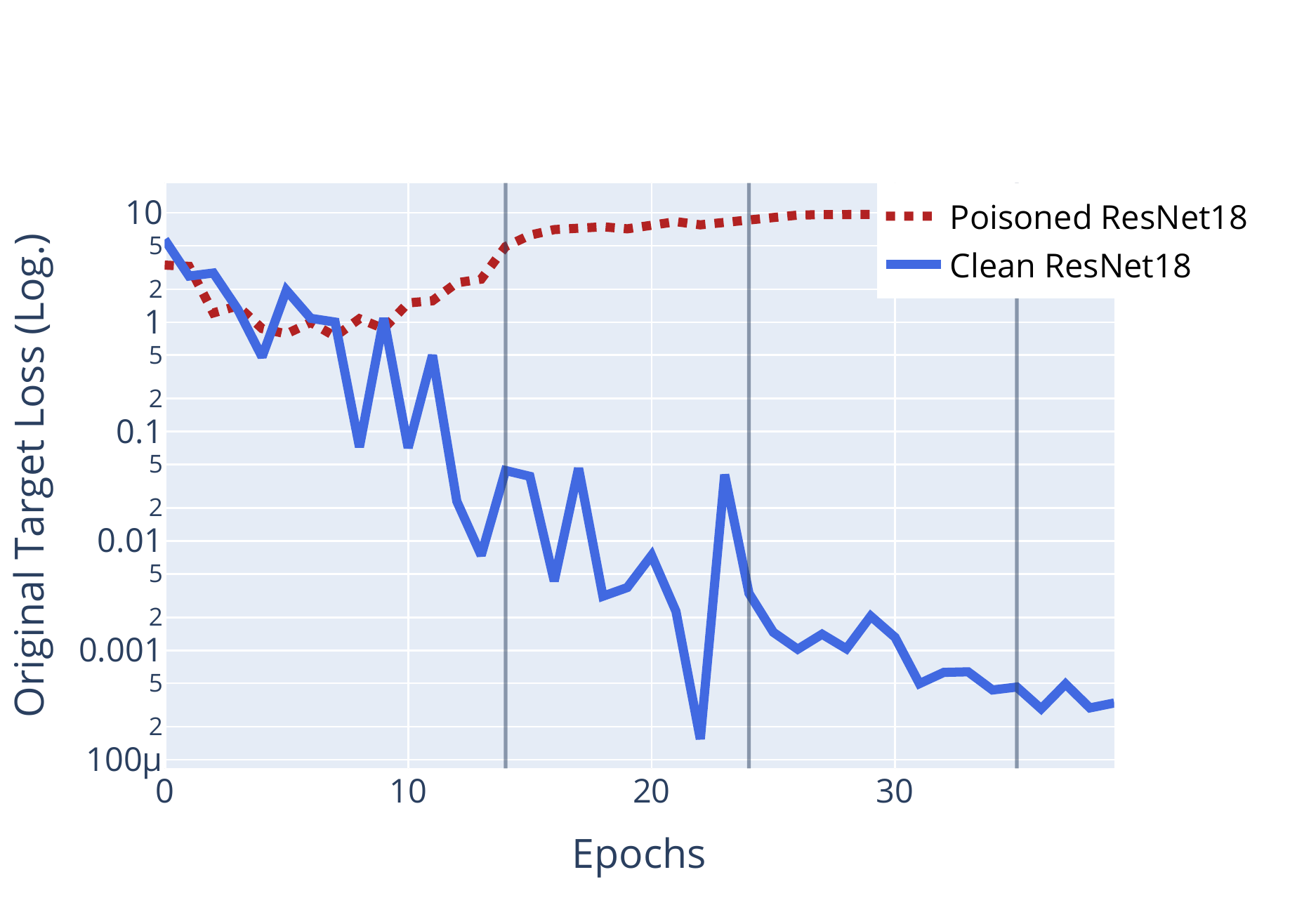} \\
    \includegraphics[width=0.48\textwidth]{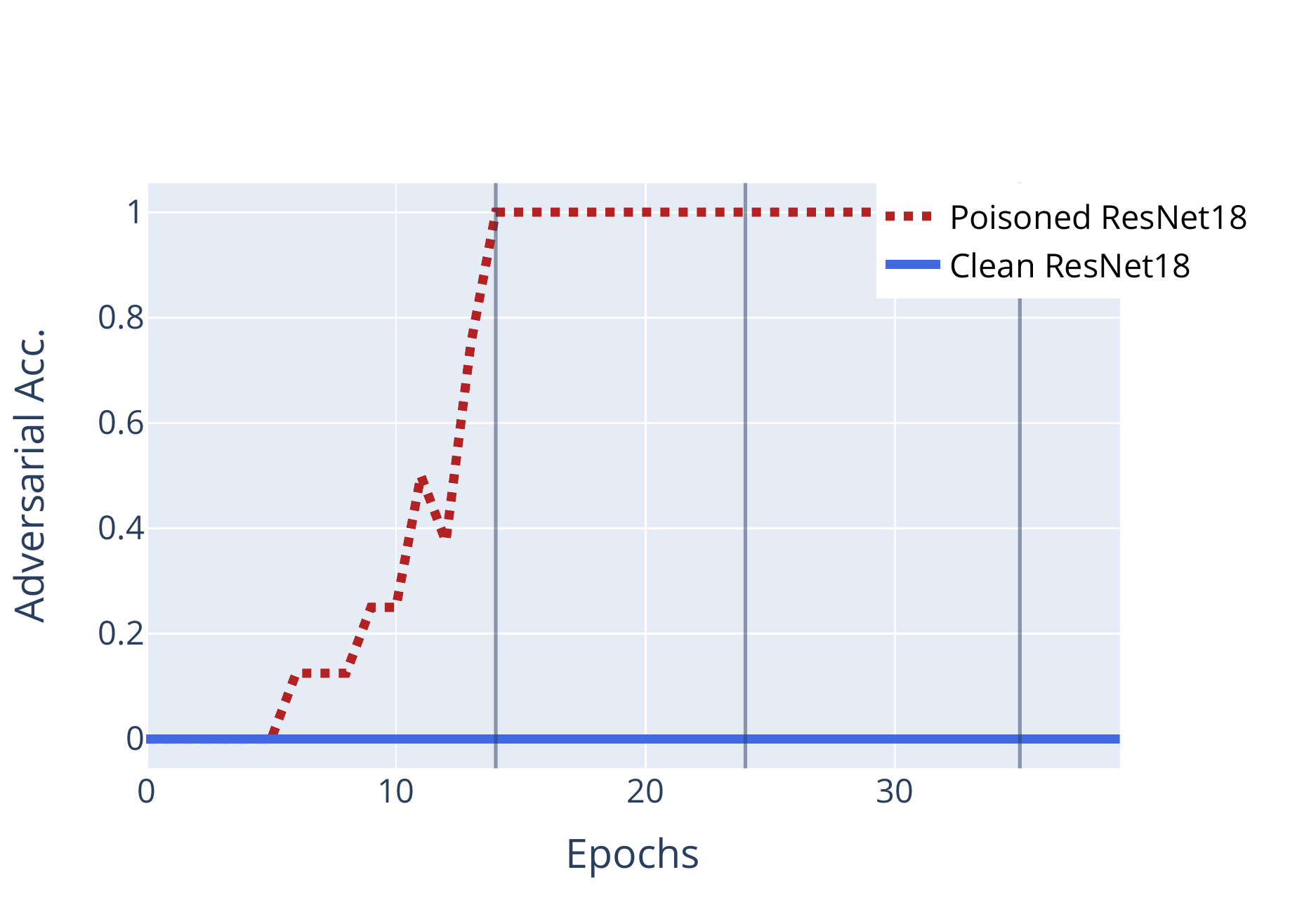}
    \includegraphics[width=0.48\textwidth]{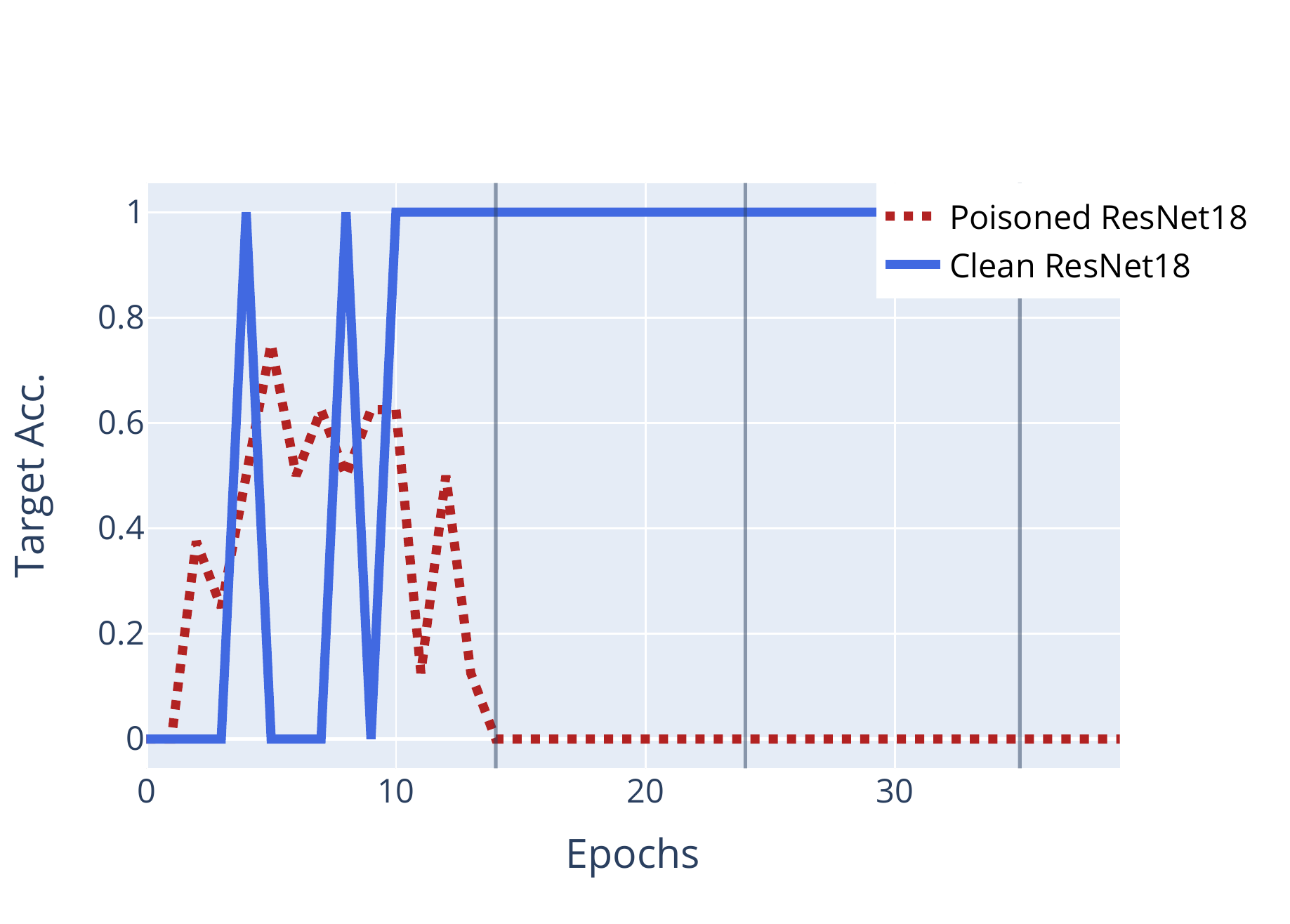}
    \caption{Cross entropy loss (Top) and accuracy (Bottom) for a given target with its adversarial label (left), and with its original label (right) shown for a poisoned and a clean ResNet-18. The clean model is used as victim for the poisoned model. The loss is averaged 8 times for the poisoned model. Learning rate drops are marked with gray horizontal bars.
    }
    \label{fig:adv_loss_acc}
\end{figure}
\section{Additional Experiments}
This section contains additional experiments.

\subsection{Full-scale MetaPoison Comparisons on CIFAR-10}
Removing all constraints for time and memory, we visualize time/accuracy of our approach against other poisoning approaches in \cref{tab:long_comparison}. Note that attacks, like MetaPoison, which succeed on CIFAR-10 only after removing these constraints, cannot be used on ImageNet-sized datasets due to the significant computational effort required.
For MetaPoison, we use the original implementation of \citet{huang_metapoison:_2020}, but add our larger models. We find that with the larger architectures and different threat model (original MetaPoison considers a color perturbation in addition to the $\ell^\infty$ bound), our gradient matching technique still significantly outperforms MetaPoison. Note that for the ConvNet experiment on MetaPoison in \cref{tab:comparison}, we found that MetaPoison seems to overfit with $\varepsilon=32$, and as such we show numbers running the MetaPoison code with $\varepsilon=16$ in that column, which are about $8\%$ better than $\varepsilon=16$. This is possibly a hyperparameter question for MetaPoison, which was optimized for $\varepsilon=8$ and a color perturbation.
\begin{figure}
    \centering
    \includegraphics[width=0.5\textwidth]{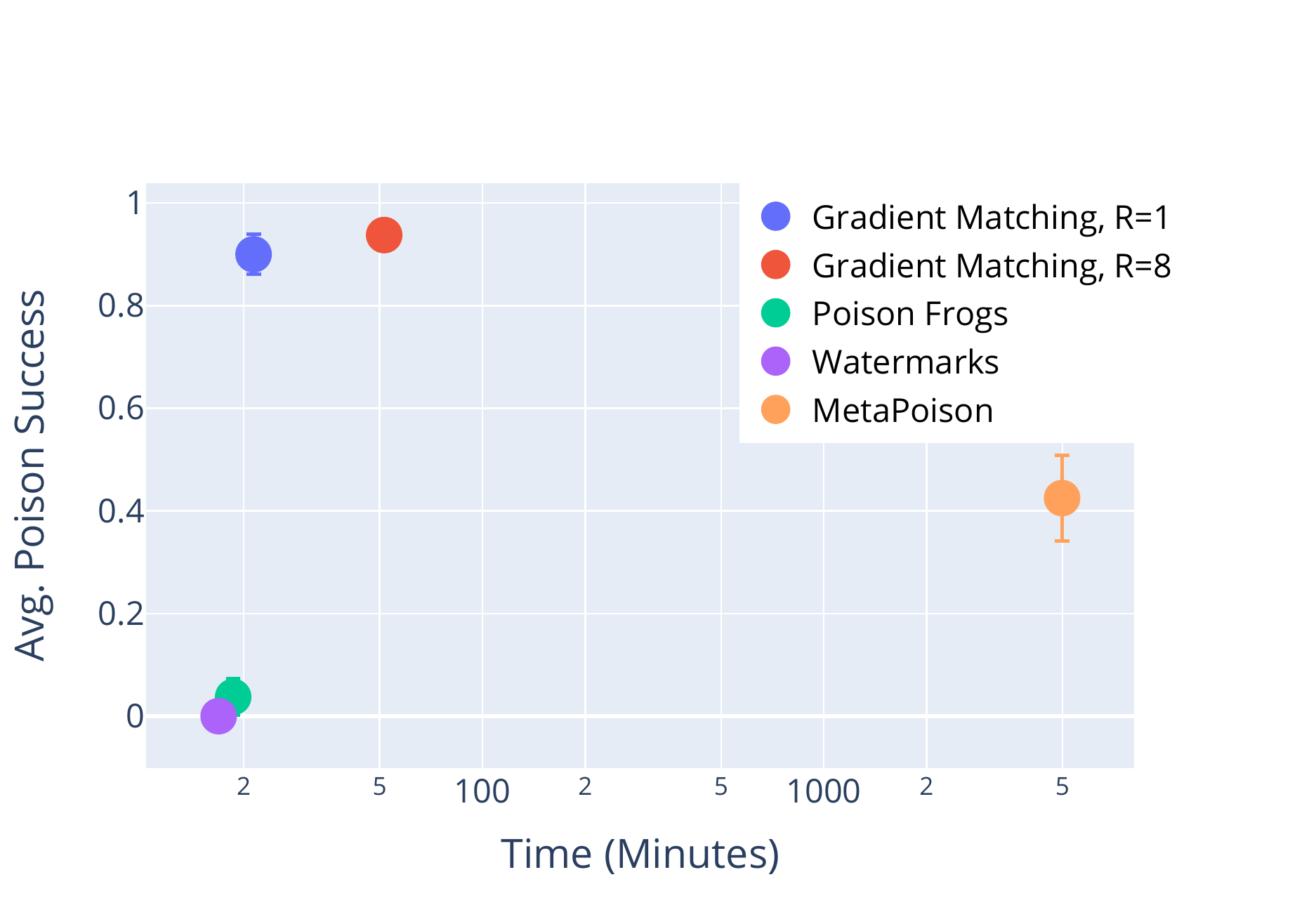}
    \caption{CIFAR-10 comparison without time and memory constraints for a ResNet18 with realistic training. Budget $1\%$, $\varepsilon=16$. Note that the x-axis is logarithmic.}
    \label{tab:long_comparison}
\end{figure}

\subsection{Deficiencies of Filtering Defenses}
\label{ap:defenses}
Defenses aim to sanitize training data of poisons by detecting outliers (often in feature space), and removing or relabeling these points \citep{steinhardt_certified_2017, paudice_detection_2018,peri2019deep}. In some cases, these defenses are in the setting of general performance degrading attacks, while others deal with targeted attacks. By in large, poison defenses up to this point are limited in scope. 
For example, many defenses that have been proposed are specific to simple models like linear classifiers and SVM, or the defenses are tailored to weaker attacks such as collision based attacks where feature space is well understood \citep{steinhardt_certified_2017, paudice_detection_2018, peri2019deep}. However, data sanitization defenses break when faced with stronger attacks.
\Cref{tab:defenses} shows a defense by anomaly filtering. averaged over $6$ randomly seeded poisoning runs on CIFAR-10 ($4$\% budget w/ $\varepsilon = 16$), we find that outlier detection is only marginally more successful than random guessing.
\begin{table}[ht]
    \centering
    \caption{Outlier detection is close to random-guessing for poison detection on CIFAR-10.}
    \begin{tabular}{r c c}
    \quad & 10\% filtering & 20\% filtering \\ \hline
    Expected poisons removed (outlier method) & 248 & 467 \\ 
    Expected clean removed (outlier method) & 252 & 533 \\ 
    Expected poisons removed (random guessing) & 200 & 400 \\ 
    Expected clean removed (random guessing) & 300 & 600 \\ 
    \end{tabular} 
    \label{tab:defenses}
\end{table}{}

\subsection{Details: Defense by Differential Privacy}
In \cref{fig:defense:b} we consider a defense by differential privacy. According to \citet{hong_effectiveness_2020}, gradient noise is the key factor that makes differentially private SGD \citep{abadi_deep_2016} useful as a defense. As such we keep the gradient clipping fixed to a value of $1$ and only increase the gradient noise in \cref{fig:defense:b}. To scale differentially private SGD, we only consider this gradient clipping on the mini-batch level, not the example level. This is reflected in the red, dashed line. A trivial counter-measure against this defense is shown as the solid red line. If the level of gradient noise is known to the attacker, then the attacker can brew poisoned data by the approach shown in \cref{alg:bp}, but also add gradient noise and gradient clipping to the poison gradient. We use a naive strategy of redrawing the added noise every time the matching objective $\mathcal{B}(\Delta, \theta)$ is evaluated. It turns out that this yields a good baseline counter-attack against the defense through differential privacy.

\subsection{Details: Gradient Alignment Visualization}
\begin{figure}
    \centering
\begin{subfigure}[b]{0.32\textwidth}
         \centering
         \includegraphics[width=\textwidth]{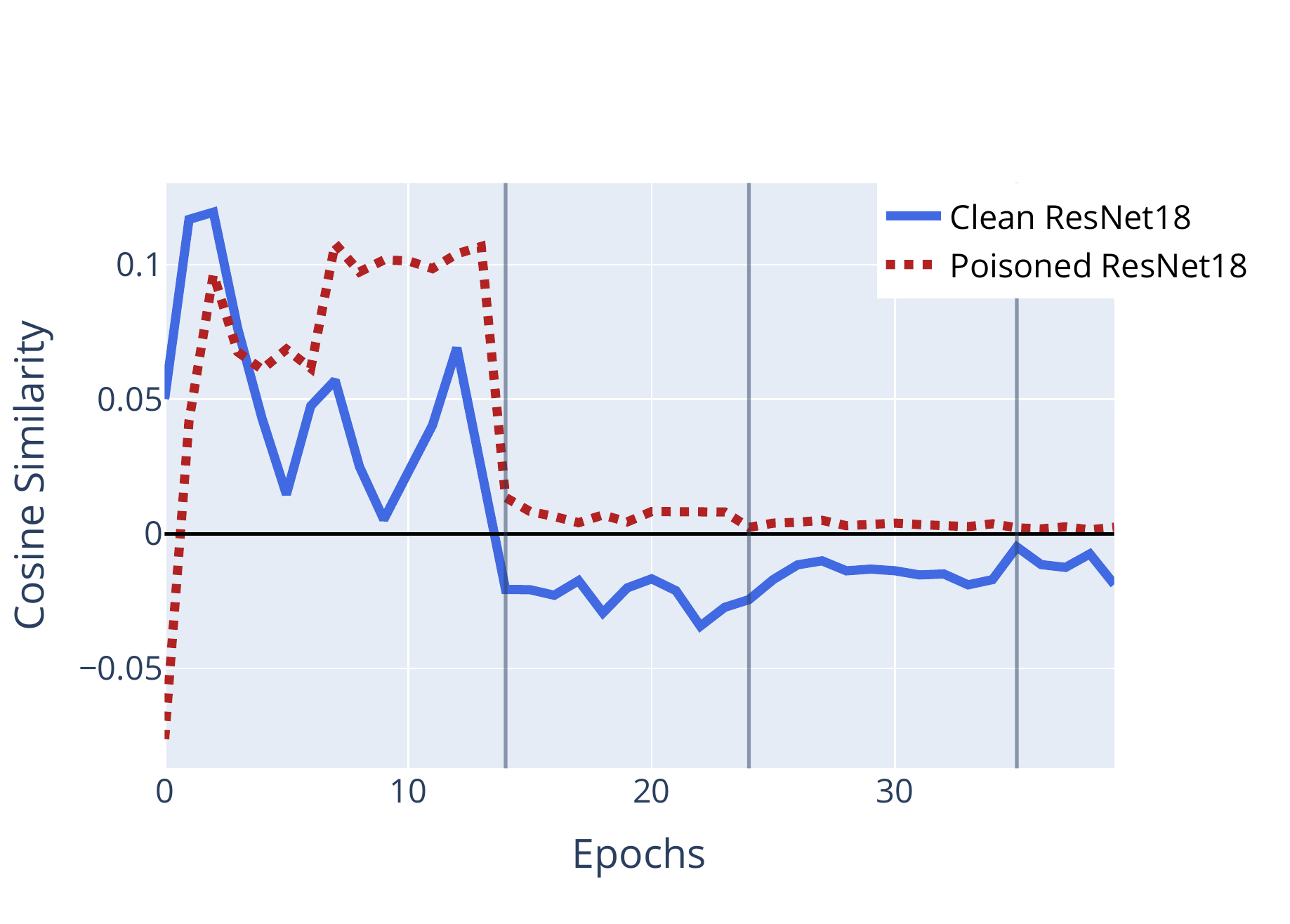}
         \caption{Alignment of $\nabla \mathcal{L}_\text{adv}(\theta)$ and  $\nabla \mathcal{L}(\theta)$}
         \label{fig:alignment_long:a}
     \end{subfigure}
\begin{subfigure}[b]{0.32\textwidth}
         \centering
         \includegraphics[width=\textwidth]{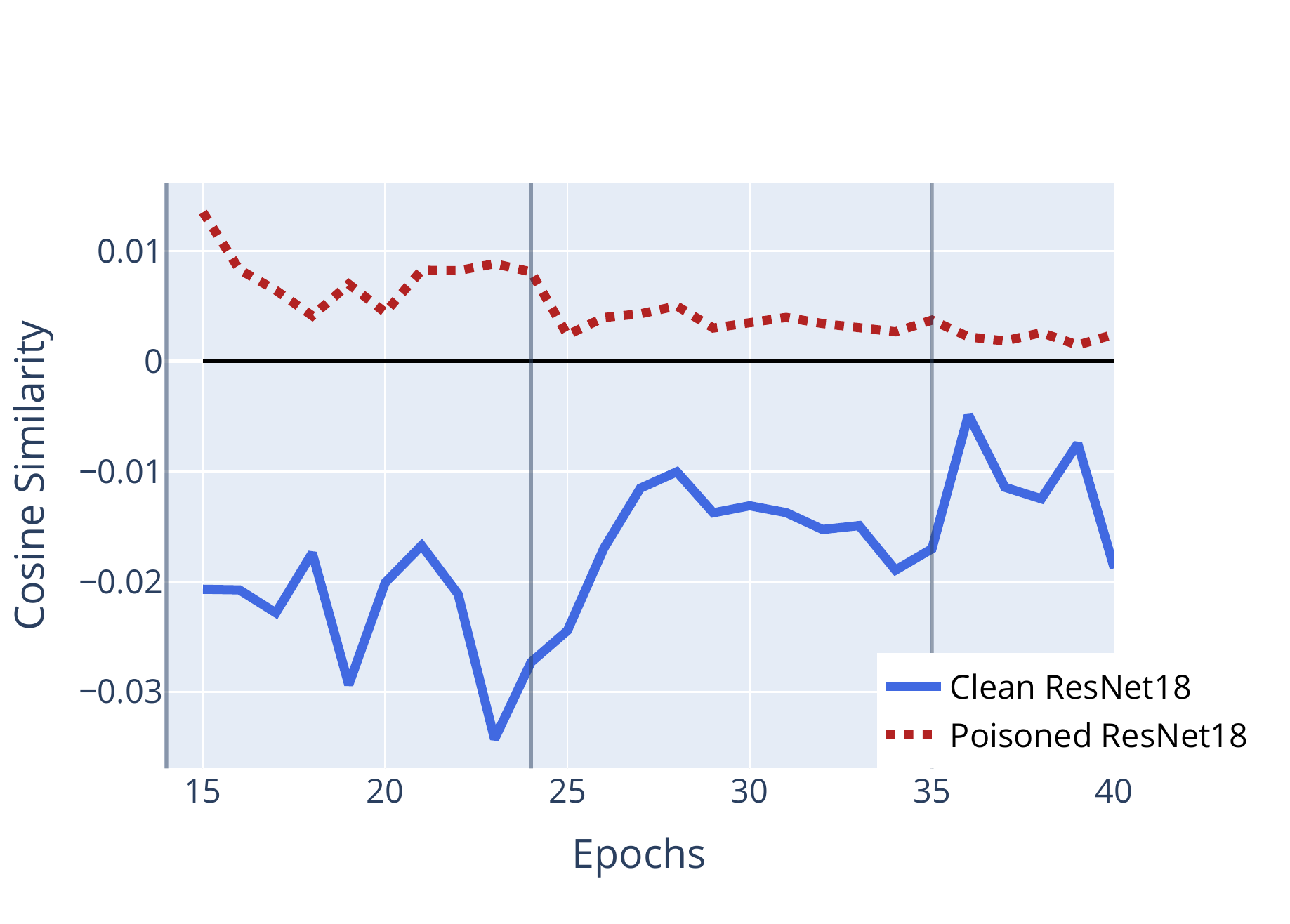}
         \caption{Zoom: Alignment of $\nabla \mathcal{L}_\text{adv}(\theta)$ and  $\nabla \mathcal{L}(\theta)$ from epoch 14.}
         \label{fig:alignment_long:a2}
     \end{subfigure}
\begin{subfigure}[b]{0.32\textwidth}
         \centering
         \includegraphics[width=\textwidth]{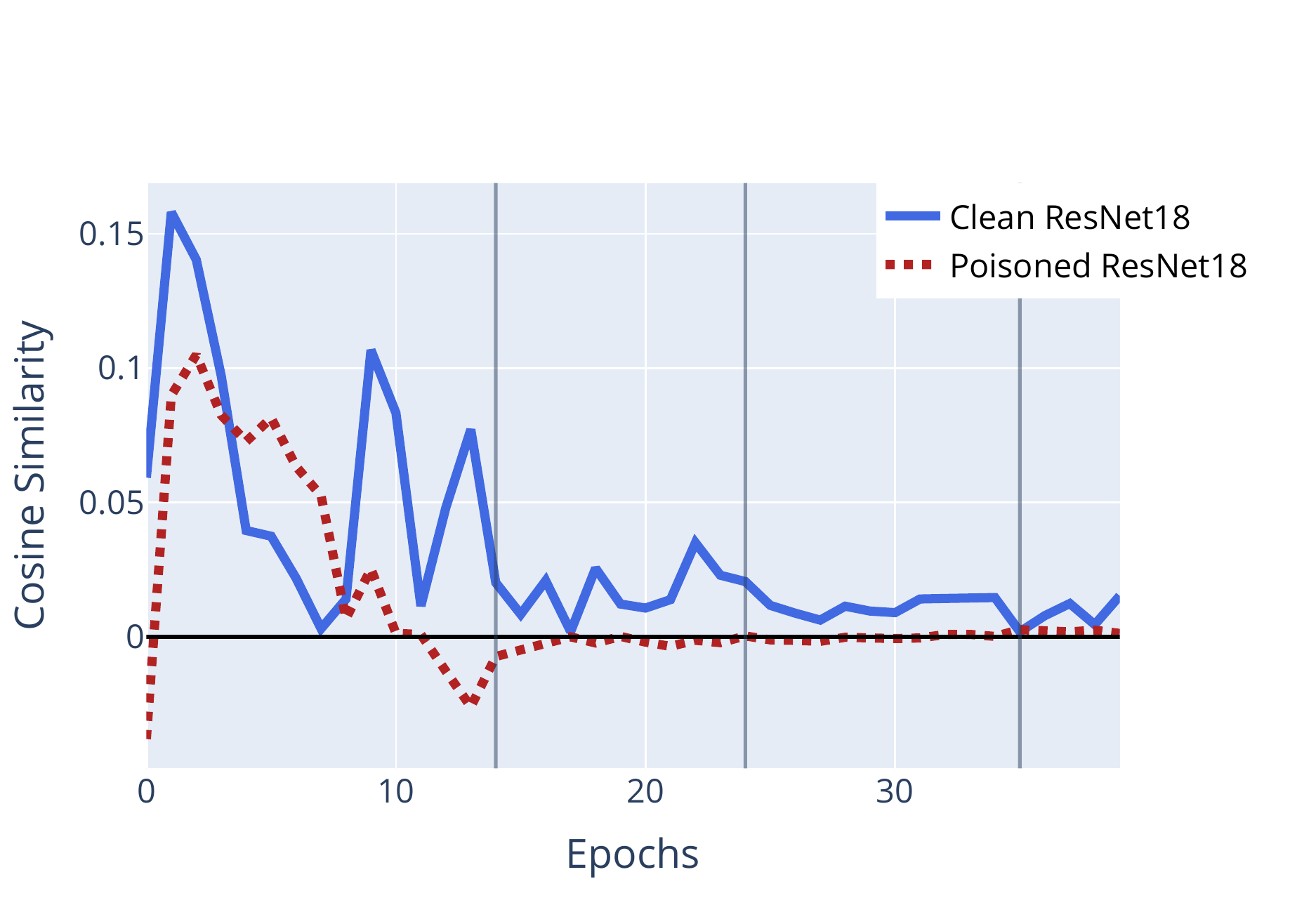}
         \caption{Alignment of $\nabla \mathcal{L}_\text{t}(\theta)$ (orig. label) and  $\nabla \mathcal{L}(\theta)$}
         \label{fig:alignment_long:b}
     \end{subfigure}
    \caption{Average batch cosine similarity, per epoch, between the adversarial gradient and the gradient of each mini-batch (left), and with its clean counterpart $\nabla \mathcal{L}_\text{t}(\theta) := \nabla_\theta \mathcal{L}(x^t, y^t)$ (right) for a poisoned and a clean ResNet-18. Each measurement is averaged over an epoch. 
    Learning rate drops are marked with gray vertical bars. }
    \label{fig:alignment_long}
    \vspace{-0.5cm}
\end{figure}
\Cref{fig:alignment_long} visualizes additional details regarding \cref{fig:alignment}. \Cref{fig:alignment_long:a} replicates \cref{fig:alignment} with linear scaling, whereas \cref{fig:alignment_long:a2} shows the behavior after epoch 14, which is the first learning rate drop. Note that in all figures each measurement is averaged over an epoch and the learning rate drops are marked with gray vertical bars.
\Cref{fig:alignment_long:b} shows the opposite metric, that is the alignment of the original (non-adversarial) gradient. It is important to note for these figures, that the positive alignment is the crucial, whereas the magnitude of alignment is not as important. As this is the gradient averaged over the entire epoch, the contributions are from mini-batches can contain none or only a single poisoned example.

\subsection{Ablation Studies - Reduced Brewing/Victim Training Data}
In order to further test the strength and possible limitations of the discussed poisoning method, we perform several ablation studies, where we reduce either the training set known to the attacker or the set of poisons used by the victim, or both.

In many real world poisoning situations, it is not reasonable to assume that the victim will unwittingly add all poison examples to their training set, or that the attacker knows the full victim training set to begin with. For example, if the attacker puts $1000$ poisoned images on social media, the victim might only scrape $300$ of these. We test how dependent the method is on the victim training set by randomly removing a proportion of data (clean + poisoned) from the victim's training set. We then train the victim on the ablated poisoned dataset, and evaluate the target image to see if it is misclassified by the victim as the attacker's intended class. Then, we add another assumption - the brewing network does not have access to all victim training data when creating the poisons (see tab \ref{tab:train_ablation}). We see that the attacker can still successfully poison the victim, even after a large portion of the victim's training data is removed, or the attacker does not have access to the full victim training set.

\begin{table}[!ht]
    \centering
    \caption{Average poisoning success under victim training data ablation. In the first regime, victim ablation, a proportion of the victim's training data (clean + poisoned) is selected randomly and then the victim trains on this subset. In the second regime, pretrained + victim ablation, the pretrained network is trained on a randomly selected proportion of the data, and then the victim chose a new random subset of clean + poisoned data on which to train. All results averaged over 5 runs on ImageNet.}
    \begin{tabular}{r c c}
    \quad & 70\% data removed & 50\% data removed \\ \hline
    victim ablation & $60\%$ & $100\%$  \\ 
    pretrained + victim ablation & $60\%$ & $80\%$  \\ 
    \end{tabular} 
    \label{tab:train_ablation}
\end{table}

\subsection{Ablation Studies - Method}
\Cref{tab:ablation_appendix} shows different variations of the proposed method. While using the Carlini-Wagner loss as a surrogate for cross entropy helped in \citet{huang_metapoison:_2020}, it does not help in our setting. We further find that running the proposed method for only 50 steps (instead of 250 as everywhere else in the paper) leads to a significant loss in avg. poison success. Lastly we investigate whether using euclidean loss instead of cosine similarity would be beneficial. This would basically imply trying to match eq.~(2) directly. Euclidean loss amounts to removing the invariance to gradient magnitude, in comparison to cosine similarity, which is invariant. We find that this is not beneficial in our experiments, and that the invariance with respect to gradient magnitude does allow for the construction of stronger poisoned datasets. 
Interestingly the discrepancy between both loss functions is related to the width of the network. In \cref{fig:ablation_viz} on the left, we visualize avg. poison success for modified ResNet-18s. The usual base width of 64 is replaced by the width value shown on the x-axis. For widths smaller than 16, the Euclidean loss dominates, but its effectiveness does not increase with width. In contrast the cosine similarity is superior for larger widths and seems to be able to make use of the greater representative power of the wider networks to find vulnerabilities.
\cref{fig:ablation_viz} on the right examines the impact of the pretrained model that is supplied to \cref{alg:bp}. We compare avg. poison success against the number of pretraining epochs for a budget of $1\%$, first with $\varepsilon=16$ and then with $\varepsilon=8$. It turns out that for the easier threat model of $\varepsilon=8$, even pretraining to only 20 epochs can be enough for the algorithm to work well, whereas in the more difficult scenario of $\varepsilon=8$, performance increases with pretraining effort.

\begin{figure}
    \centering
    \includegraphics[width=0.45\textwidth]{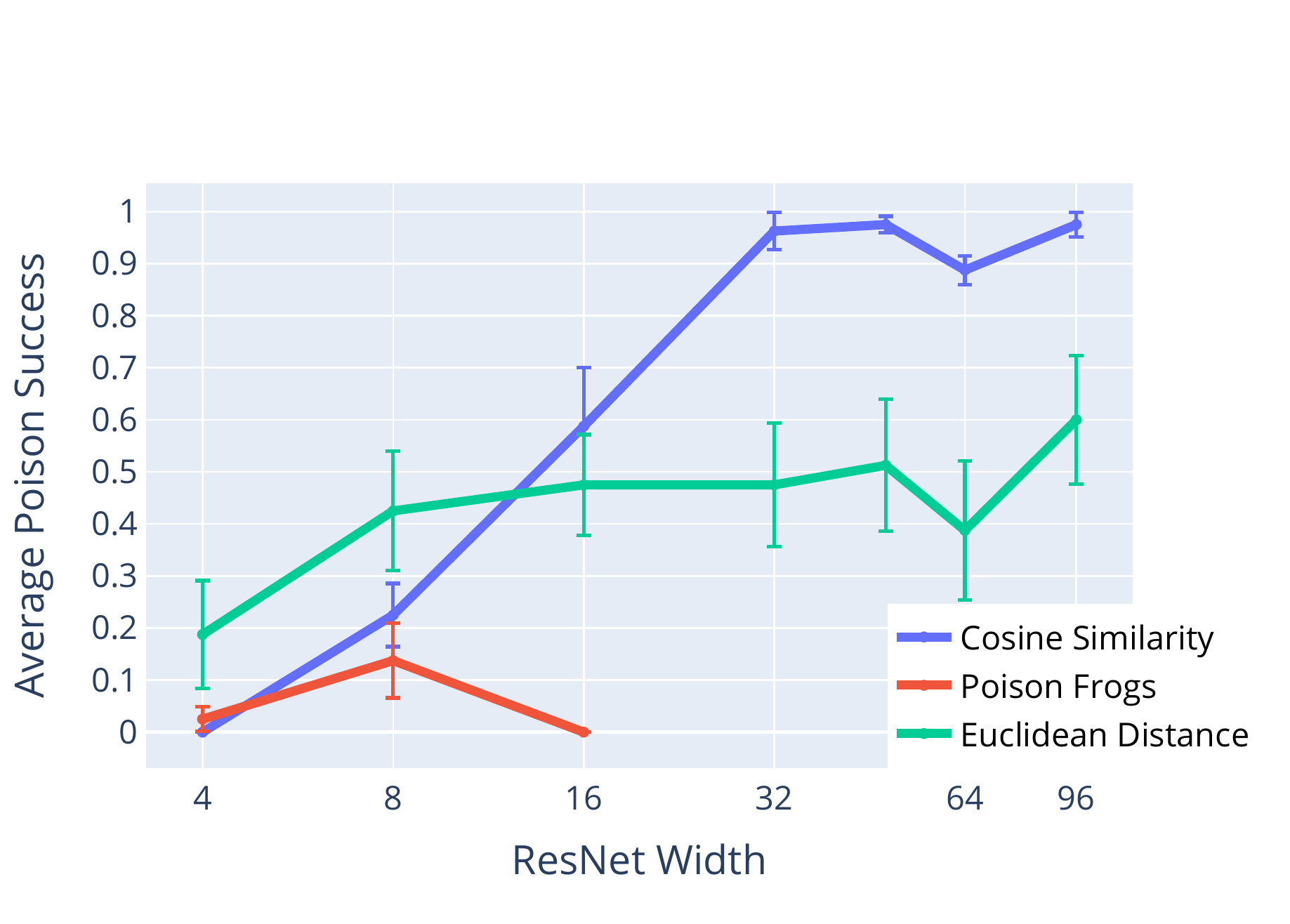}
    \includegraphics[width=0.45\textwidth]{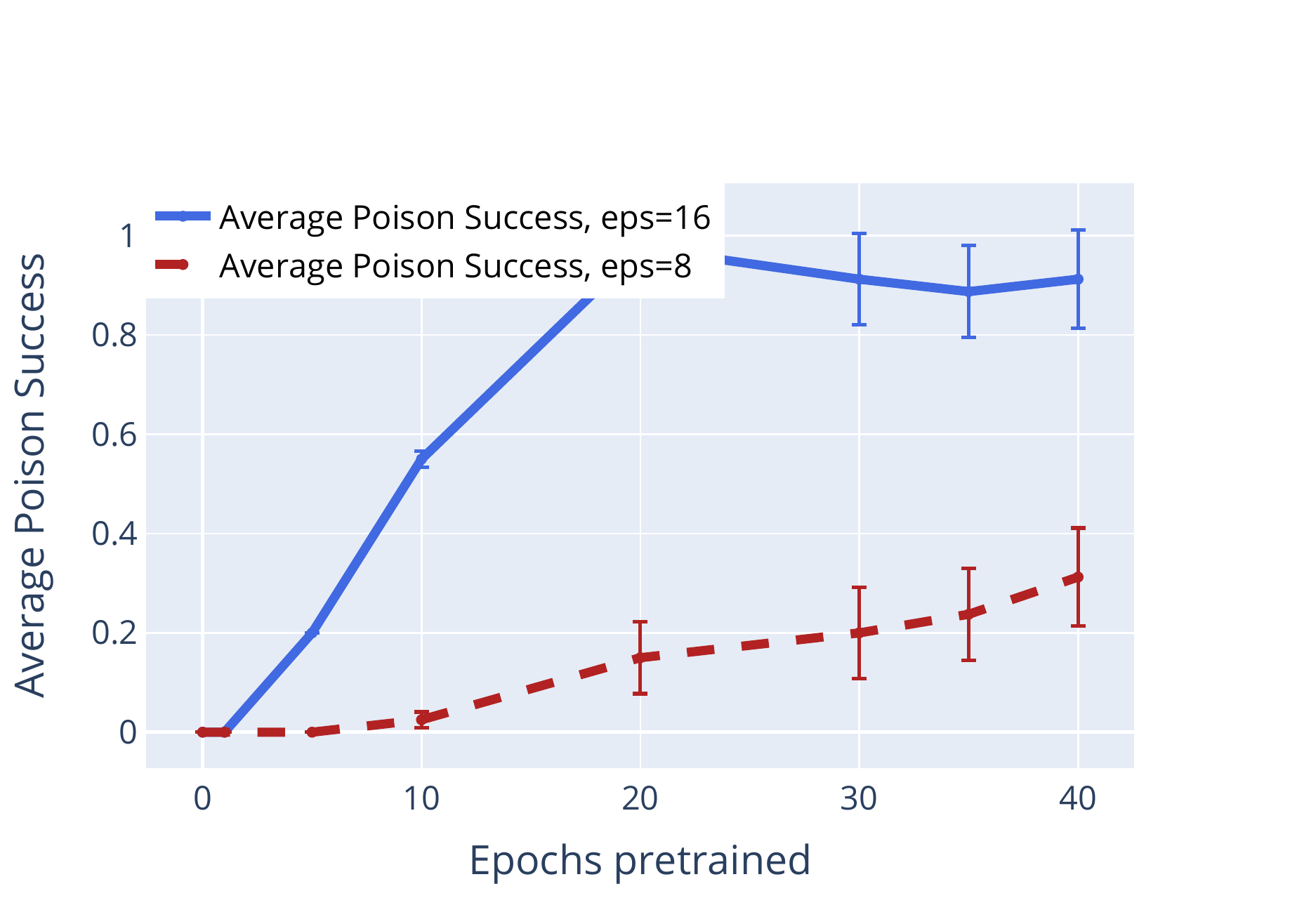}
    \caption{Ablation Studies. Left: avg. poison success for Euclidean Loss, cosine similarity and the Poison Frogs objective \citep{shafahi_poison_2018} for thin ResNet-18 variants. Right: Avg. poison success vs number of pretraining epochs.}
    \label{fig:ablation_viz}
\end{figure}

\begin{table}
\centering
\caption{CIFAR-10 ablation runs. $\varepsilon=16$, budget is $1\%$. All values are computed for ResNet-18 models.}
\begin{tabular}{r c c}
Setup & Avg. Poison Success $\% (\pm \text{SE}$) & Validation Acc.\% \\ 
\hline
Baseline (full data aug., $R=8$, $M=250$  & 91.25\% ($\pm 6.14$)  & 92.20\% \\
Carlini-Wagner loss instead of $\mathcal{L}$  & 77.50\% ($\pm 9.32$)  & 92.08\% \\
Fewer Opt. Steps ($M=50$) & 40.00\% ($\pm 10.87$)  & 92.05\% \\
Euclidean Loss instead of cosine sim. & 61.25\% ($\pm 9.75$)  & 92.09\% \\
\end{tabular}
\label{tab:ablation_appendix}
\end{table}

\subsection{Transfer Experiments}
In addition to the fully black-box pipeline of the AutoML experiments in \cref{sec:experimental}, we test the transferability of our poisoning method against other commonly used architectures. Transfer results on CIFAR-10 can be found in \cref{tab:benchmark}. On Imagenet, we brew poisons with a variety of networks, and test against other networks. We find that poisons crafted with one architecture can transfer and cause targeted mis-classification in other networks (see \cref{fig:transfer}). 
\begin{figure}
    \centering
    \includegraphics[scale=0.7]{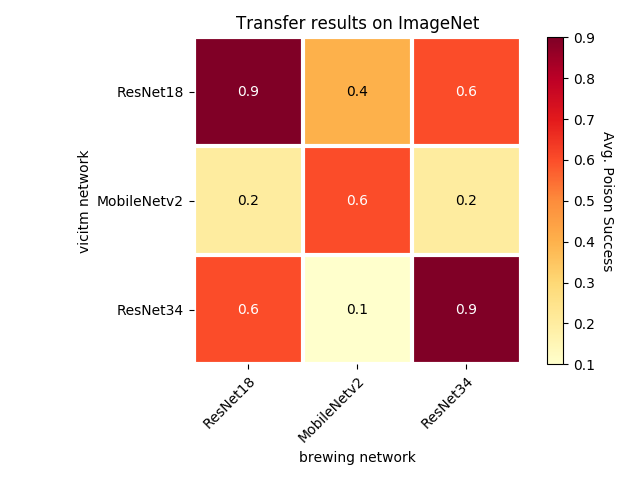}
    \caption{Direct transfer results on common architectures. Averaged over 10 runs with budget of $0.1\%$ and $\varepsilon$-bound of $16$. Note that for these transfer experiments, the model was \textit{only} trained on the "brewing" network, without knowledge of the victim. This shows a transferability to unknown architectures. }
    \label{fig:transfer}
\end{figure}

\subsection{Multi-Target Experiments}
\begin{table}
\centering
\caption{CIFAR-10 ablation runs. $\varepsilon=16$, budget is $1\%$ with default parameters, modifying only the number of targets and budget.}
\begin{tabular}{r c c c c }
Targets & Budget & Avg. Poison Success  & Budget / Target & Success $\times$ Target\\ 
\hline
1 & 1\% & 90.00\% ($\pm 4.74$) & 1\% & 90\\
1 & 4\% & 95.00\% ($\pm 4.87$) & 4\% & 95 \\
1 & 6\% & 90.00\% ($\pm 6.71$) & 6\% & 90 \\
2 & 1\% & 65.00\% ($\pm 7.16$) & 0.5\% & 130 \\
4 & 1\% & 36.25\% ($\pm 4.50$) & 0.25\% & 145\\
4 & 4\% & 65.00\% ($\pm 5.12$) & 1\% & 260 \\
4 & 6\% & 46.25\% ($\pm 6.19$) & 1.5\% & 185\\
5 & 5\% & 44.00\% ($\pm 5.40$) & 1\% & 220 \\
6 & 6\% & 30.83\% ($\pm 5.92$) & 1\% & 185\\
8 & 4\% & 25.62\% ($\pm 3.59$) &  0.5\% & 205\\
16 & 4\% & 18.12\% ($\pm 2.86$) & 0.25\% & 290\\
\end{tabular}
\label{tab:multi_target}
\end{table}

We also perform limited tests on poisoning multiple targets simultaneously. We find that while keeping the small poison budget of $1\%$ fixed, we are able to successfully poison more than one target while optimizing poisons simultaneously, see \cref{tab:multi_target}. Effectively, however, every target image gradient has to be matched with an increasingly smaller budget. As the target images are drawn at random and not semantically similar (aside from their shared class), their synergy is limited. As such, we generally require a larger budget for multiple targets. We show various combinations of number of targets and budget in \cref{tab:multi_target}. While it is possible to reach near-100\%  avg. poison success for a single target in the setting considered in this work, this value is not reached when optimizing for multiple targets, even when the budget is increased - although the total number of erroneous classifications increases. We analyze this via the last column in \cref{tab:multi_target}, showing avg. poison success multiplied by number of targets, where we find that multiple targets with increased budget can lead to more total mis-classifications, e.g. a score of $290$ for 16 targets and a budget of $4\%$, yet the success for each individual target is only 18\% on average.

\subsection{No Impact on Validation Accuracy}
The discussed attack does not significantly alter clean validation accuracy (i.e. validation accuracy on all validation images besides the targets), as the attack is specifically tailored to align only to specific target gradients, and as only a small budget of images is changed within $\varepsilon$ bounds. We validate this by reporting the clean validation accuracy for the CIFAR-10 baseline experiment in \cref{subsec:baseline} in \cref{tab:val_acc}, finding that a drop in validation accuracy is on the order of $0.1\%$.

\begin{table}
\centering
\caption{CIFAR-10 baseline clean validation accuracy. $\varepsilon=16$, budget is $1\%$. All values are computed for ResNet-18 models as in the baseline plot in \cref{subsec:baseline}.}
\begin{tabular}{r c c }
Setting  & Unpoisoned & Poisoned \\
\hline
$K=1,R=1$&  92.25\% ($\pm 0.10$)  & 92.12\% ($\pm 0.05$) \\
$K=2,R=1$& 92.16\% ($\pm 0.08$)  & 92.06\% ($\pm 0.04$) \\
$K=4,R=1$& 92.18\% ($\pm 0.05$)  & 92.08\% ($\pm 0.04$) \\
$K=8,R=1$& 92.16\% ($\pm 0.04$)  & 92.20\% ($\pm 0.03$) \\
$K=1,R=8$& 92.22\% ($\pm 0.11$)  & 92.08\% ($\pm 0.04$) \\
$K=2,R=8$& 92.27\% ($\pm 0.07$)  & 92.03\% ($\pm 0.05$) \\
$K=8,R=8$& 92.13\% ($\pm 0.05$)  & 92.04\% ($\pm 0.03$) \\
\end{tabular}
\label{tab:val_acc}
\end{table}

\end{document}